\newtheorem{theorem}{Theorem}
\newtheorem{lemma}{Lemma}
\newtheorem{definition}{Definition}
\newtheorem{corollary}{Corollary}
\def\U{\mathcal{U}}
\def\R{\mathbb{R}}
\def\E{\mathbb{E}}
\def\P{\mathbb{P}}
\def\1{\mathbf{1}}
\title{Active Ranking using Pairwise Comparisons}
\author{Kevin G. Jamieson\\
University of Wisconsin\\
Madison, WI 53706, USA \\
\texttt{\small kgjamieson@wisc.edu}
\And 
Robert D. Nowak\\
University of Wisconsin\\
Madison, WI 53706, USA \\
\texttt{\small nowak@engr.wisc.edu}
}
\begin{document}

\maketitle

\begin{abstract}
  This paper examines the problem of ranking a collection of objects using pairwise comparisons (rankings of two objects).  In general, the ranking of $n$ objects can be identified by standard sorting methods using $n\log_2 n$ pairwise comparisons. We are interested in natural situations in which relationships among the objects may allow for ranking using far fewer pairwise comparisons. {Specifically, we assume that the objects can be embedded into a $d$-dimensional Euclidean space and that the rankings reflect their relative distances from a common reference point in $\R^d$. We show that under this assumption the number of possible rankings grows like $n^{2d}$ and demonstrate an algorithm that can identify a randomly selected ranking using just slightly more than $d\log n$ adaptively selected pairwise comparisons, on average.}  If instead the comparisons are chosen at random, then almost all pairwise comparisons must be made in order to identify any ranking. In addition, we propose a robust, error-tolerant algorithm that only requires that the pairwise comparisons are probably correct. Experimental studies with synthetic and real datasets support the conclusions of our theoretical analysis. \end{abstract}

\section{Introduction}

This paper addresses the problem of ranking a set of objects based on a limited number of pairwise comparisons (rankings between pairs of the objects).  A {\em ranking} over a set of $n$ objects $\Theta= (\theta_1,\theta_2,\dots,\theta_n)$ is a mapping $\sigma: \{1,\dots,n\} \rightarrow \{1,\dots,n\} $ that prescribes an order \begin{align} \sigma(\Theta) & :=  \theta_{\sigma(1)} \prec \theta_{\sigma(2)} \prec \dots \prec \theta_{\sigma({n-1})}\prec \theta_{\sigma({n})} \label{sigma} \end{align} where $\theta_i \prec \theta_j$ means $\theta_i$ precedes $\theta_j$ in the ranking. {A ranking uniquely determines the collection of pairwise comparisons between all pairs of objects}. The primary objective here is to bound the number of pairwise comparisons needed to correctly determine the ranking when the objects (and hence rankings) satisfy certain known structural constraints.  Specifically, we suppose that the objects may be embedded into a low-dimensional Euclidean space such that the ranking is consistent with distances in the space.  We wish to exploit such structure in order to discover the ranking using a very small number of pairwise comparisons. To the best of our knowledge, this is a {previously} open and unsolved problem.

There are practical and theoretical motivations for restricting our attention to pairwise rankings that are discussed in Section~\ref{motivation}.  We begin by assuming that every pairwise comparison is consistent with an unknown ranking.  Each pairwise comparison can be viewed as a query: is $\theta_i$ before $\theta_j$?  Each query provides $1$ bit of information about the underlying ranking.  Since the number of rankings is $n!$, in general, specifying a ranking requires $\Theta(n\log n)$ bits of information.  This implies that at least this many pairwise comparisons are required without additional assumptions about the  ranking.  In fact, this lower bound can be achieved with a standard adaptive sorting algorithm like binary sort \cite{knuth}. In large-scale problems or when humans are queried for pairwise comparisons, obtaining this many pairwise comparisons may be impractical and therefore we consider situations in which the space of rankings is structured and thereby less complex. 

A natural way to induce a structure on the space of rankings is to suppose that the objects can be embedded into a $d$-dimensional Euclidean space so that the distances between objects are consistent with the ranking.  This may be a reasonable assumption in many applications, and for instance the audio dataset used in our experiments is believed to have a 2 or 3 dimensional embedding \cite{philips}.  We further discuss motivations for this assumption in Section~\ref{motivation}. It is not difficult to show (see Section~\ref{geometry}) that the number of full rankings that could arise from $n$ objects embedded in $\R^d$ grows like $n^{2d}$, and so specifying a ranking from this class requires only $O(d\log n)$ bits.  The main results of the paper show that under this assumption a randomly selected ranking can be determined using $O(d\log n$) pairwise comparisons selected in an adaptive and sequential fashion, but almost all $\binom{n}{2}$ pairwise rankings are needed if they are picked randomly rather than selectively.  In other words, {\em actively selecting the most informative queries has a tremendous impact on the complexity of learning the correct ranking.}

\subsection{Problem statement}
\label{ps}

Let $\sigma$ denote the ranking to be learned. The objective is to learn the ranking by querying the reference for pairwise comparisons of the form
\begin{align}
q_{i,j} & :=  \{ \theta_i \prec \theta_j \}.
\label{query}
\end{align}
The response or label of $q_{i,j}$ is binary and denoted as $y_{i,j} := \1\{q_{i,j}\}$ where $\1$ is the indicator function; ties are not allowed.  The main results quantify the minimum number of queries or labels required to determine the reference's ranking, and they are based on two key assumptions. 

\textbf{A1 Embedding:} {The set of $n$ objects are embedded in $\R^d$ (in general position) and we will also use $\theta_1,\dots,\theta_n $ to refer to their (known) locations in $\R^d$. Every ranking $\sigma$ can be specified by a {\em reference} point $r_{\sigma} \in \R^d$, as follows. The Euclidean distances between the reference and objects are consistent with the ranking in the following sense: if the $\sigma$ ranks $\theta_i \prec \theta_j$, then $\|\theta_i-r_{\sigma}\| < \|\theta_j - r_{\sigma}\|$. Let $\Sigma_{n,d}$ denote the set of all possible rankings of the $n$ objects that satisfy this embedding condition. }

{The interpretation of this assumption is that we know how the objects are related (in the embedding), which limits the space of possible rankings.  The ranking to be learned, specified by the reference (e.g., preferences of a human subject), is unknown. Many have studied the problem of finding an embedding of objects from data \cite{mcfee,gormley,cox}. This is not the focus here, but it could certainly play a supporting role in our methodology (e.g., the embedding could be determined from known similarities between the $n$ objects, as is done in our experiments with the audio dataset).  We assume the embedding is given and our interest is minimizing the number of queries needed to learn the ranking, and for this we require a second assumption.}

\textbf{A2 Consistency:} Every pairwise comparison is consistent with the  ranking to be learned.  That is, if the reference ranks $\theta_i \prec \theta_j$, then $\theta_i$ must precede $\theta_j$ in the (full) ranking. 

As we will discuss later in Section~\ref{LBQC}, these two assumptions alone are not enough to rule out pathological arrangements of objects in the embedding for which at least $\Omega(n)$ queries must be made to recover the ranking. However, because such situations are not representative of what is typically encountered, we analyze the problem in the framework of the average-case analysis \cite{traub}.  
{\begin{definition}  \label{distributionDef}
With each ranking $\sigma \in \Sigma_{n,d}$ we associate a probability $\pi_{\sigma}$ such
that $\sum_{\sigma \in \Sigma_{n,d}} \pi_{\sigma} = 1$. Let $\pi$ denote these probabilities
and write $\sigma \sim \pi$ for shorthand.  The uniform distribution corresponds to $\pi_\sigma = |\Sigma_{n,d}|^{-1}$ for all $\sigma \in \Sigma_{n,d}$, and we write $\sigma \sim \U$ for this special case.
\end{definition}
\begin{definition}
If $M_n(\sigma)$ denotes the number of pairwise comparisons requested by an algorithm to identify the ranking $\sigma$, then the {\em average query complexity} with respect to $\pi$ is denoted by $\E_\pi [ M_n ]$.  \vspace{-5pt}
\end{definition}}
{The main results are proven for the special case of $\pi=\U$, the uniform distribution, to make the analysis more transparent and intuitive. However the results can easily be extended to general distributions $\pi$ that satisfy certain mild conditions (see Appendix~\ref{simpleTheoremProof}).} All results henceforth, unless otherwise noted, will be given in terms of (uniform) average query complexity and we will say such results hold ``on average.'' 

Our main results can be summarized as follows. If the queries are chosen deterministically or randomly in advance of collecting the corresponding pairwise comparisons, then we show that almost all $\binom{n}{2}$ pairwise comparisons queries are needed to identify a ranking under the assumptions above. {However, if the queries are selected in an adaptive and sequential fashion according to the algorithm in Figure~\ref{fig:alg}, then we show that the number of pairwise rankings required to identify a ranking is no more than a constant multiple of $d \log n$, on average.} The algorithm requests a query if and only if the corresponding pairwise ranking is ambiguous (see Section~\ref{ambigDef}), meaning that it cannot be determined from previously collected pairwise comparisons and the locations of the objects in $\R^d$.  The efficiency of the algorithm is due to the fact that {\em most} of the queries are unambiguous when considered in a sequential fashion.  For this very same reason, picking queries in a non-adaptive or random fashion is very inefficient. It is also noteworthy that the algorithm is also computationally efficient with an overall complexity no greater than $O( n \ \text{poly}(d) \ \text{poly}(\log n) )$ (see Appendix~\ref{complexity}). In Section~\ref{robust} we present a robust version of the algorithm of Figure~\ref{fig:alg} that is tolerant to a fraction of errors in the pairwise comparison queries. In the case of {\em persistent errors} (see Section~\ref{robust}) we show that we can find a probably approximately correct ranking by requesting just $O(d \log^2 n)$ pairwise comparisons. This allows us to handle situations in which either or both of the assumptions, \textbf{A1} and \textbf{A2}, are reasonable approximations to the situation at hand, but do not hold strictly (which is the case in our experiments with the audio dataset). 

Proving the main results involves an uncommon marriage of ideas from the ranking and statistical learning literatures.  Geometrical interpretations of our problem derive from the seminal works of \cite{coombs} in ranking and \cite{cover} in learning.  From this perspective our problem bears a strong resemblance to the halfspace learning problem, with two crucial distinctions.  In the ranking problem, the underlying halfspaces are not in general position and have strong dependencies with each other. These dependencies invalidate many of the typical analyses of such problems \cite{dasgupta, hanneke}. One popular method of analysis in exact learning involves the use of something called the {\em extended teaching dimension} \cite{hegedus}. However, because of the possible pathological situations alluded to earlier, it is easy to show that the extended teaching dimension must be at least $\Omega(n)$ making that {sort of worst-case}  analysis uninteresting. These differences present unique challenges to learning.

\begin{figure}%
\centering
\parbox{2.5in}{%
%\vspace{-.34in}
\fbox{\parbox[b]{2.4in}{{\underline{\bf Query Selection Algorithm}}  \\ \vspace{-.1in} \\
input: $n$ objects in $\R^d$\\
initialize: objects $\Theta = \{\theta_1,\dots,\theta_n\}$ in uniformly random order \\ \vspace{-.1in} \\
for j=2,\dots,n \\
\mbox{ \   } for i=1,\dots,j-1 \\
\mbox{ \ \ \ \ \bf if} $q_{i,j}$ is {\em ambiguous}, \\
\mbox{ \ \ \ \ \ } request $q_{i,j}$'s label from reference; \\
\mbox{ \ \ \ \ \bf else} \\
\mbox{ \ \ \ \ \ } impute $q_{i,j}$'s label from previously\\
\mbox{ \ \ \ \ \ }  labeled queries. \\ \vspace{-.1in} \\
output: ranking of $n$ objects
}}
\label{fig:alg}
\caption{Sequential algorithm for selecting queries. See Figure~\ref{ambiguous} and Section~\ref{ambigDef} for the definition of an ambiguous query. }\label{fig:alg}
\vspace{.06in}
}%
\qquad
\begin{minipage}{2.6in}%
\begin{center}
\includegraphics[scale=.325]{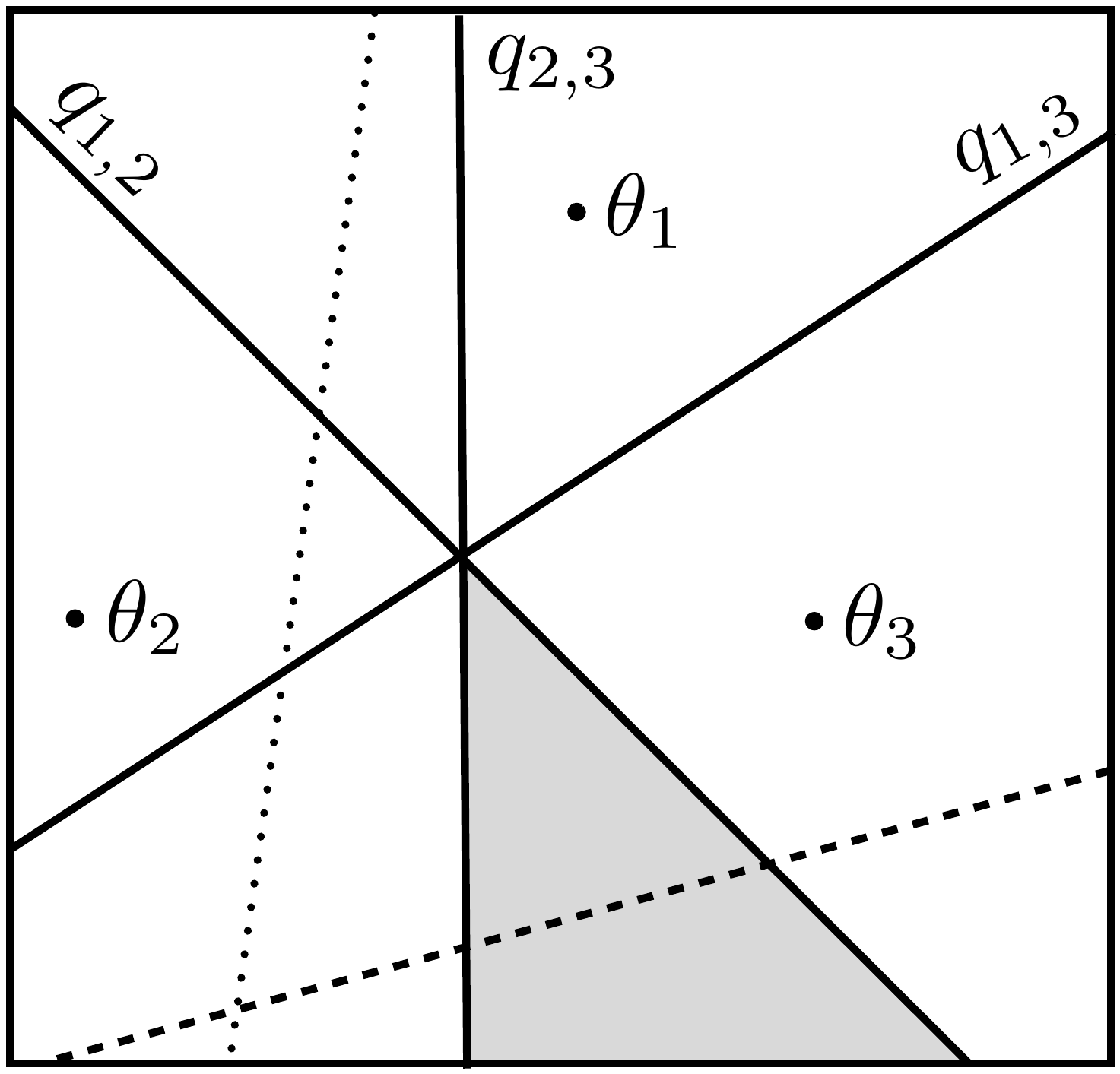} 
\end{center}
\caption{Objects $\theta_1,\theta_2,\theta_3$ and queries. The $r_\sigma$ lies in the shaded region (consistent with the labels of $q_{1,2},q_{1,3},q_{2,3}$).  The dotted (dashed) lines represent new queries whose labels are (are not) ambiguous given those labels.}
\label{ambiguous}
\end{minipage}
\vspace{-3pt}
\end{figure}%

\section{Motivation and related work} \label{motivation} 
The problem of learning a ranking from few pairwise comparisons is motivated by what we perceive as a significant gap in the theory of ranking and permutation learning. {Most work in ranking with structural constraints assumes a passive approach to learning; pairwise comparisons or partial rankings are collected in a random or non-adaptive fashion and then  aggregated to obtain a full ranking (cf. \cite{freund, burges, zheng, herbrich}).  However, this may be quite inefficient in terms of the number of pairwise comparisons or partial rankings needed to learn the (full) ranking.  This inefficiency was recently noted in the related area of social choice theory \cite{lu}.   Furthermore, empirical evidence suggests that, even under complex ranking models,  adaptively selecting pairwise comparisons can reduce the number needed to learn the ranking \cite{chu}.} It is cause for concern since in many applications it is expensive and time-consuming to obtain pairwise comparisons. For example, psychologists and market researchers collect pairwise comparisons to gauge human preferences over a set of objects, for scientific understanding or product placement. The scope of these experiments is often very limited simply due to the time and expense required to collect the data.  This suggests the consideration of more selective and judicious approaches to gathering inputs for ranking.  We are interested in taking advantage of underlying structure in the set of objects in order to choose more informative pairwise comparison queries. From a learning perspective, our work adds an active learning component to a problem domain that has primarily been treated from a passive learning mindset.  

We focus on pairwise comparison queries for two reasons.  First, pairwise comparisons admit a halfspace representation in embedding spaces which allows for a geometrical approach to learning in such structured ranking spaces.  Second, pairwise comparisons are the most common form of queries in many applications, especially those involving human subjects. For example, consider the problem of finding the most highly ranked object, as illustrated by the following familiar task. Suppose a patient needs a new pair of prescription eye lenses. Faced with literally millions of possible prescriptions, the doctor will present candidate prescriptions in a sequential fashion followed by the query: better or worse? Even if certain queries are repeated to account for possible inaccurate answers, the doctor can locate an accurate prescription with just a handful of queries. This is possible presumably  because the doctor understands (at least intuitively) the intrinsic space of prescriptions and can efficiently search through it using only binary responses from the patient.

We assume that the objects can be embedded in $\R^d$ and that the distances between objects and the reference are consistent with the ranking (Assumption {\bf A1}). The problem of learning a general function $f:  \R^d \rightarrow \R$ using just pairwise comparisons that correctly ranks the objects embedded in $\R^d$ has previously been studied in the passive setting \cite{freund, burges, zheng,herbrich}. The main contributions of this paper are theoretical bounds for the specific case when $f(x) = ||x - r_{\sigma}||$ where $r_{\sigma} \in \R^d$ is the reference point. This is a standard model used in multidimensional unfolding and psychometrics \cite{coombs,bennett} and one can show that this model also contains the familiar functions $f(x) = r_{\sigma}^T x$ for all $r_{\sigma} \in \R^d$. We are unaware of any existing query-complexity bounds for this problem.  {We do not assume a generative model is responsible for the relationship between rankings to embeddings, but one could.  For example, the objects might have an embedding (in a feature space) and the ranking is generated by distances in this space.  Or alternatively, structural constraints on the space of rankings could be used to generate a consistent embedding.} Assumption {\bf A1}, while arguably quite natural/reasonable in many situations, significantly constrains the set of possible rankings.

\section{Geometry of rankings from pairwise comparisons} \label{geometry}

The embedding assumption \textbf{A1} gives rise to geometrical interpretations of the ranking problem, which are developed in this section.  {The pairwise comparison $q_{i,j}$ can be viewed as the membership query: is $\theta_i$ ranked before $\theta_j$ in the (full) ranking $\sigma$? The geometrical interpretation is that $q_{i,j}$ requests whether the reference $r_{\sigma}$ is closer to object $\theta_i$ or object $\theta_j$ in $\R^d$.} Consider the line connecting $\theta_i$ and $\theta_j$ in $\R^d$.  The hyperplane that bisects this line and is orthogonal to it defines two halfspaces: one containing points closer to $\theta_i$ and the other the points closer to $\theta_j$.  Thus, $q_{i,j}$ is a membership query about which halfspace $r_{\sigma}$ is in, and there is an equivalence between each query, each pair of objects, and the corresponding bisecting hyperplane. The set of all possible pairwise comparison queries can be represented as $\binom{n}{2}$ distinct halfspaces in $\R^d$.  The intersections of these halfspaces partition $\R^d$ into a number of cells, and each one corresponds to a unique ranking of $\Theta$.  Arbitrary rankings are not possible due to the embedding assumption \textbf{A1}, and recall that the set of rankings possible under \textbf{A1} is denoted by $\Sigma_{n,d}$.  The cardinality of $\Sigma_{n,d}$ is equal to the number of cells in the partition. We will refer to these cells as $d$-cells (to indicate they are subsets in $d$-dimensional space) since at times we will also refer to lower dimensional cells; e.g., $(d-1)$-cells.

\subsection{Counting the number of possible rankings}

The following lemma determines the cardinality of the set of rankings, $\Sigma_{n,d}$, under assumption \textbf{A1}.

\begin{lemma} \cite{coombs} \label{counting}
Assume \textbf{A1-2}. Let $Q(n,d)$ denote the number of $d$-cells defined by the hyperplane arrangement of pairwise comparisons between these objects (i.e. $Q(n,d) = |\Sigma_{n,d}|$).  $Q(n,d)$ satisfies the recursion
\begin{align}
Q(n,d) = Q(n-1,d) + (n-1) Q(n-1,d-1) \ , \ \mbox{where $Q(1,d) =1$ and $Q(n,0) = 1$}.\label{counteq}
\end{align}
%where $Q(1,d) =1$ and $Q(n,0) = 1$.
\vspace{-15pt}\end{lemma}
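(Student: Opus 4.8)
The plan is to prove the recursion by an incremental (sweep) argument that adds one object at a time and counts the new $d$-cells created by the bisecting hyperplanes associated with that object. Write $\mathcal{A}_m$ for the arrangement of the $\binom{m}{2}$ bisecting hyperplanes $H_{i,j} = \{x : \|x-\theta_i\| = \|x-\theta_j\|\}$ of the first $m$ objects, so that $\mathcal{A}_m$ has $Q(m,d)$ many $d$-cells. Passing from $\mathcal{A}_{n-1}$ to $\mathcal{A}_n$ amounts to inserting the $n-1$ new bisectors $H_{1,n},\dots,H_{n-1,n}$. The workhorse is the standard incremental identity: when a single hyperplane $H$ is added to an arrangement $\mathcal{A}$ in $\R^d$, the number of newly created $d$-cells equals the number of $(d-1)$-cells into which $H$ is partitioned by $\mathcal{A}$, since each such piece of $H$ splits exactly one existing cell into two. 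First I would dispatch the two base cases directly: $Q(1,d)=1$ because a single object generates no bisectors and leaves $\R^d$ as one cell, and $Q(n,0)=1$ because $\R^0$ is a single point admitting only one cell.

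The central step is to show that each new hyperplane $H_{k,n}$ contributes exactly $Q(n-1,d-1)$ new cells when the new bisectors are inserted one after another (after all of $\mathcal{A}_{n-1}$). Here I would use two observations about the arrangement induced on the affine subspace $H_{k,n} \cong \R^{d-1}$. The first is a merging identity: every point of $H_{k,n}$ is equidistant from $\theta_k$ and $\theta_n$, so the trace of a previously inserted new bisector $H_{l,n}$ on $H_{k,n}$ coincides with the trace of the old bisector $H_{k,l}$ (both are the locus equidistant from $\theta_k,\theta_l,\theta_n$). Thus the new hyperplanes inserted before $H_{k,n}$ create no additional subdivision of $H_{k,n}$ beyond what $\mathcal{A}_{n-1}$ already induces, which makes the per-hyperplane contribution independent of the insertion order. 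The second observation is that, because $\theta_k$ and $\theta_n$ act as a single object on $H_{k,n}$, the traces of the bisectors $H_{i,j}$ with $i,j \le n-1$ form precisely the bisecting-hyperplane arrangement of the $n-1$ objects $\theta_1,\dots,\theta_{n-1}$ within the $(d-1)$-dimensional space $H_{k,n}$. Counting its cells therefore yields $Q(n-1,d-1)$, and summing over the $n-1$ choices of $k$ gives $Q(n,d) = Q(n-1,d) + (n-1)Q(n-1,d-1)$.

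I expect the main obstacle to be the second observation in rigorous form. The induced loci $\{x \in H_{k,n} : \|x-\theta_i\| = \|x-\theta_j\|\}$ are equidistance sets of points $\theta_i$ that do not lie in $H_{k,n}$, so within $H_{k,n}$ they are additively weighted (power-diagram) bisectors rather than ordinary perpendicular bisectors of points in $\R^{d-1}$. I would need to argue that, under the general-position hypothesis of \textbf{A1}, this weighted arrangement is combinatorially equivalent to an ordinary $(n-1)$-object bisector arrangement in $\R^{d-1}$ — or at least has the same number of cells — so that its cell count is exactly $Q(n-1,d-1)$. The structural fact that enables this is that the characteristic three-way dependency of bisectors is preserved: for any triple $i,j,l$ the loci $H_{i,j},H_{j,l},H_{i,l}$ share a common $(d-2)$-flat, and intersecting with $H_{k,n}$ preserves this incidence, so the induced arrangement has the same intersection pattern as a genuine bisector arrangement. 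Verifying that this shared combinatorial structure forces the same cell count (rather than merely bounding it) is the delicate point, and it is where I would concentrate the rigorous work.
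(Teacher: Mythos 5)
Your approach is the same as the paper's: the paper justifies Lemma~\ref{counting} only by the two-sentence remark following it (the result itself is credited to \cite{coombs}) --- each hyperplane is partitioned by the others into $Q(n-1,d-1)$ $(d-1)$-cells, and the recursion arises by adding one object at a time --- and your incremental identity, your merging observation $H_{l,n}\cap H_{k,n}=H_{k,l}\cap H_{k,n}$ (transitivity of equidistance), and your identification of the trace arrangement on $H_{k,n}$ with an $(n-1)$-object arrangement in $\R^{d-1}$ are precisely that sketch made explicit.

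On the delicate point you flag, one adjustment to where you plan to concentrate the rigor: do not aim at combinatorial equivalence with an \emph{ordinary} bisector arrangement of $n-1$ points in $\R^{d-1}$. As you yourself observe, the traces are additively weighted (power-diagram) bisectors, and weighted bisector arrangements need not be realizable by unweighted point bisectors, so that target may be unattainable. The clean fix is to strengthen the induction hypothesis: prove the recursion for the wider class of arrangements $g_{ij}=\{x\in\R^e:\|x-p_i\|^2+c_i=\|x-p_j\|^2+c_j\}$, $1\le i<j\le m$, with generic sites $p_i$ and weights $c_i$ (the case $c_i\equiv 0$ is the lemma). This class is closed under exactly the restriction your sweep requires: on $g_{kn}$ the weighted distance functions to $p_k$ and $p_n$ coincide, and writing $\|x-p_i\|^2+c_i=\|x-\hat p_i\|^2+\bigl(c_i+\operatorname{dist}(p_i,g_{kn})^2\bigr)$ for $x\in g_{kn}$, where $\hat p_i$ is the orthogonal projection onto $g_{kn}$, forces $\hat p_k=\hat p_n$ and exhibits the trace arrangement as a weighted bisector arrangement of $m-1$ sites in $\R^{e-1}$; the triple concurrence you noted is automatic by transitivity of the weighted equalities. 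Under genericity the cell count then depends only on $(m,e)$, your merging argument makes the insertion order irrelevant, and the sweep closes the recursion with the stated base cases --- no separate equivalence or ``same intersection pattern forces same count'' claim is needed.
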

In the hyperplane arrangement induced by the $n$ objects in $d$ dimensions, each hyperplane is intersected by every other and is partitioned into $Q(n-1,d-1)$ subsets or $(d-1)$-cells. The recursion, above, arises by considering the addition of one object at a time. {Using this lemma in a straightforward fashion,
we prove the following corollary in Appendix~\ref{countCorProof}.}
\begin{corollary} \label{countCor} 
Assume \textbf{A1-2}. There exist positive real numbers $k_1$ and $k_2$ such that
\begin{align*}
k_1 \, \frac{n^{2d}}{2^{d}d!} < Q(n,d) < k_2 \, \frac{n^{2d}}{2^{d}d!}
\end{align*}
for $n > d+1$. If $n \leq d+1$ then $Q(n,d)=n!$.  For $n$ sufficiently large, $k_1 = 1$ and $k_2=2$ suffice.  
\end{corollary}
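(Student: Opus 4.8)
The plan is to run a single induction on the dimension $d$, driving it with the recursion \eqref{counteq} unrolled in $n$, and to treat the two regimes of the statement separately. First I would dispose of the case $n\le d+1$, proving $Q(n,d)=n!$ by a short induction on $n$. The base $n=1$ gives $Q(1,d)=1=1!$, and for the step I observe that when $n\le d+1$ both arguments on the right of \eqref{counteq} stay in this regime (since $n-1\le d+1$ and $n-1\le(d-1)+1$), so the inductive hypothesis makes them $(n-1)!$, and \eqref{counteq} then yields $Q(n,d)=(n-1)!+(n-1)(n-1)!=n!$.

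For $n>d+1$ I would unroll \eqref{counteq} in $n$ down to the edge of the factorial regime, obtaining
\begin{align*}
Q(n,d)=(d+1)!+\sum_{m=d+1}^{n-1}m\,Q(m,d-1),
\end{align*}
which expresses $Q(\cdot,d)$ purely through $Q(\cdot,d-1)$ and so supports an induction on $d$ alone (base case $d=0$, where $Q(n,0)=1=n^{0}/(2^{0}0!)$). Substituting the inductive estimate for $Q(m,d-1)$ collapses the problem to the power sum $\sum_{m\le n}m^{2d-1}$, which by integral comparison equals $\tfrac{n^{2d}}{2d}(1+o(1))$; multiplying the constant $\tfrac{1}{2^{d-1}(d-1)!}$ by $\tfrac{1}{2d}$ produces exactly the claimed leading coefficient $\tfrac{1}{2^{d}d!}$. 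The upper bound then closes with room to spare: using $Q(m,d-1)<k_2\,m^{2d-2}/(2^{d-1}(d-1)!)$ and $\sum_{m\le n-1}m^{2d-1}<\int_{1}^{n}x^{2d-1}\,dx<\tfrac{n^{2d}}{2d}$ gives $Q(n,d)<(d+1)!+k_2\,\tfrac{n^{2d}}{2^{d}d!}$, and since $(d+1)!=o(n^{2d})$ the additive term is absorbed for large $n$, so any $k_2>1$ (in particular $k_2=2$) is comfortably enough.

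The lower bound is the real obstacle. The matching integral lower bound only gives $\sum_{m=d+1}^{n-1}m^{2d-1}\ge\tfrac{(n-1)^{2d}-d^{2d}}{2d}$, and since $(n-1)^{2d}=n^{2d}-2d\,n^{2d-1}+\cdots$, an induction that carries only the leading term loses an amount of order $n^{2d-1}$ at each level and cannot close with the leading constant. Concretely, the closed form $Q(n,1)=\tfrac{n^{2}-n+2}{2}<\tfrac{n^{2}}{2}$ already shows the ratio approaches $1$ strictly from below, so $k_1=1$ is never attained at finite $n$ and is sharp only in the limit; thus the honest statement is that any fixed $k_1<1$ works once $n$ is large.

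To make the induction actually go through I would strengthen the hypothesis to the additive form $Q(m,d-1)\ge m^{2d-2}/(2^{d-1}(d-1)!)-c_{d-1}m^{2d-3}$, carry the next-order term through the power-sum estimate, and solve the resulting elementary recursion for the correction coefficient $c_d$; because $c_d$ stays bounded, the correction is genuinely lower order, which gives $Q(n,d)/\big(n^{2d}/(2^{d}d!)\big)\to1^{-}$ and hence the lower bound for any $k_1<1$. The one remaining loose end is the seam at $n=d+2$, where a recursion term is the factorial value $Q(d+1,d)=(d+1)!$ rather than an asymptotic estimate; verifying $(d+1)!\le(d+1)^{2d}/(2^{d}d!)$ seeds the induction and joins the two regimes. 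I expect this lower-bound bookkeeping — choosing the additive correction and controlling the seam — to be the only delicate part, the rest being routine binomial and integral estimates.
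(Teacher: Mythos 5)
Your proposal is correct and, at the top level, follows the same route as the paper's appendix proof: unroll the recursion (\ref{counteq}) in $n$ so that $Q(\cdot,d)$ is expressed through $Q(\cdot,d-1)$, then estimate the resulting power sums level by level in $d$. The differences are rigor and one substantive correction. The paper's proof is a terse chain of $\Theta(\cdot)$ identities with no tracking of constants, and it disposes of the regime $n\leq d+1$ by citing the embedding fact from \cite{coombs}; you instead derive $Q(n,d)=n!$ directly from the recursion (both terms on the right stay in the factorial regime, giving $(n-1)!+(n-1)(n-1)!=n!$), which is self-contained and arguably cleaner. More importantly, you correctly flag that the corollary's last sentence is false as literally stated: with the strict inequality of the display, $k_1=1$ never suffices, since $Q(n,1)=(n^2-n+2)/2 < n^2/2$ for $n\geq 3$, and more generally the coefficient of $n^{2d-1}$ in $Q(n,d)$ is strictly negative for every $d\geq 1$ (it satisfies $\beta_d = \tfrac{1}{2^d(d-1)!}+\tfrac{\beta_{d-1}}{2d-1}$ with $\beta_1=\tfrac12$), so $Q(n,d)<n^{2d}/(2^d d!)$ for all large $n$. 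Your repaired claim---any fixed $k_1<1$ works once $n$ is large---is exactly what the paper's asymptotics actually support, and your mechanism for proving it (strengthening the inductive hypothesis to $Q(m,d-1)\geq m^{2d-2}/(2^{d-1}(d-1)!)-c_{d-1}m^{2d-3}$, noting $c_d$ stays bounded, and checking the seam $(d+1)!\leq (d+1)^{2d}/(2^d d!)$) is sound; a slightly lazier alternative that also closes is to let the multiplicative constant degrade by an arbitrarily small amount at each of the $d$ (finitely many) induction levels. Net: same decomposition as the paper, but your version actually proves the two-sided bound with explicit constants and corrects an overclaim that the paper's $\Theta$-bookkeeping glosses over; since Theorem~\ref{LB} and Lemma~\ref{uniformLemma} only use $Q(n,d)=\Theta(n^{2d}/(2^d d!))$, this repair does not affect anything downstream.
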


\subsection{Lower bounds on query complexity} \label{LBQC}
Since the  cardinality of the set of possible rankings is  $|\Sigma_{n,d}| = Q(n,d)$, we have a simple lower bound on the number of queries needed to determine the ranking.
\begin{theorem} \label{LB}
Assume \textbf{A1-2}. To reconstruct an arbitrary ranking $\sigma \in \Sigma_{n,d}$ any algorithm will require at least $\log_2 | \Sigma_{n,d} | = \Theta(2d \log_2 n)$ pairwise comparisons.
\vspace{-5pt}\end{theorem}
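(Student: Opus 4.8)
The plan is to derive the lower bound from a standard information-theoretic counting argument, which is the natural approach whenever the quantity to be learned is drawn from a finite set. The key observation is that each pairwise comparison query $q_{i,j}$ returns a single binary label $y_{i,j} \in \{0,1\}$, so it conveys at most one bit of information about the unknown ranking $\sigma$. An algorithm that makes $M$ queries therefore produces at most $2^M$ distinct sequences of responses, and any deterministic algorithm maps each response sequence to a single output ranking. If the algorithm is to correctly identify every ranking in $\Sigma_{n,d}$, the map from response sequences to rankings must be surjective onto $\Sigma_{n,d}$, which forces $2^M \geq |\Sigma_{n,d}|$, i.e. $M \geq \log_2 |\Sigma_{n,d}|$.

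First I would make this decision-tree argument precise. Any algorithm (adaptive or not) that uses pairwise comparisons can be modeled as a binary decision tree: each internal node is a query $q_{i,j}$, its two children correspond to the two possible labels, and each leaf is labeled with an output ranking. For the algorithm to reconstruct an arbitrary $\sigma \in \Sigma_{n,d}$, every ranking must be reachable as a leaf label, so the tree must have at least $|\Sigma_{n,d}|$ leaves. A binary tree of depth $M$ has at most $2^M$ leaves, and the worst-case number of queries is the depth of the tree, so we again obtain $M \geq \log_2 |\Sigma_{n,d}|$.

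Next I would substitute the cardinality bound from Corollary~\ref{countCor}. For $n$ sufficiently large we have $|\Sigma_{n,d}| = Q(n,d) > \frac{n^{2d}}{2^d d!}$, whence
\begin{align*}
\log_2 |\Sigma_{n,d}| > \log_2 \frac{n^{2d}}{2^d d!} = 2d \log_2 n - d - \log_2 d! \, .
\end{align*}
Since $d$ is treated as a constant (the ambient dimension), the additive terms $-d - \log_2 d!$ do not affect the asymptotic order in $n$, and the matching upper bound $Q(n,d) < \frac{2 n^{2d}}{2^d d!}$ gives $\log_2 |\Sigma_{n,d}| < 2d\log_2 n + O(1)$. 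Combining the two shows $\log_2 |\Sigma_{n,d}| = \Theta(2d \log_2 n)$, which is the claimed bound.

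I do not anticipate a genuine obstacle here, as the result is essentially a direct application of the counting corollary together with the elementary fact that one binary query yields one bit. The only point requiring mild care is to state the decision-tree reduction so that it covers \emph{adaptive} algorithms, not merely fixed query sets; this is handled by noting that adaptivity only changes which query sits at each internal node, not the branching factor, so the $2^M$ leaf bound still applies. One might also remark that the bound as stated is a worst-case statement over $\sigma \in \Sigma_{n,d}$, and that an identical counting argument yields the same $\Omega(d\log n)$ lower bound in the average-case (uniform) setting via Shannon's source-coding bound $\E_\U[M_n] \geq H(\U) = \log_2 |\Sigma_{n,d}|$, which is consistent with the near-optimality of the adaptive algorithm claimed elsewhere in the paper.
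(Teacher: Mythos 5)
Your proposal is correct and follows essentially the same argument as the paper's own proof, which likewise combines Corollary~\ref{countCor} with the observation that each binary query yields at most one bit, so at least $\log_2 |\Sigma_{n,d}| = \Theta(2d\log_2 n)$ comparisons are needed. Your decision-tree formalization and the explicit expansion $\log_2\bigl(n^{2d}/(2^d d!)\bigr) = 2d\log_2 n - d - \log_2 d!$ simply make precise what the paper states in two sentences.
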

\begin{proof}
By Corollary~\ref{countCor} $|\Sigma_{n,d}|=\Theta(n^{2d})$, and so at least $2d \log n$ bits are needed to specify a ranking. Each pairwise comparison provides at most one bit. \vspace{-5pt}
\end{proof}  
If each query provides a full bit of information about the ranking, then we achieve this lower bound. For example, in the one-dimensional case ($d=1$) the objects can be ordered and binary search can be used to select pairwise comparison queries, achieving the lower bound. This is generally impossible in higher dimensions.  Even in two dimensions there are placements of the objects (still in general position) that produce $d$-cells in the partition induced by queries that have $n-1$ faces (i.e., bounded by $n-1$ hyperplanes) as shown in Appendix~\ref{nSides}. It follows that the worst case situation may require at least $n-1$ queries in dimensions $d\geq 2$. In light of this, we conclude that worst case bounds may be overly pessimistic indications of the typical situation, and  so we instead consider the average case performance introduced in Section~\ref{ps}.

\subsection{Inefficiency of random queries}
The geometrical representation of the ranking problem reveals that randomly choosing pairwise comparison queries is inefficient relative to the lower bound above. To see this, suppose $m$ queries were chosen uniformly at random from the possible $\binom{n}{2}$. The answers to $m$ queries narrows the set of possible rankings to a $d$-cell in $\R^d$.  This $d$-cell may consist of one or more of the $d$-cells in the partition induced by all queries.  If it contains more than one of the partition cells, then the underlying ranking is ambiguous. 
\begin{theorem}
Assume \textbf{A1-2}. Let $N=\binom{n}{2}$. Suppose $m$ pairwise comparison are chosen uniformly at random without replacement from the possible $\binom{n}{2}$. Then for all positive integers $N \geq m \geq d$ the probability that the $m$ queries yield a unique ranking is ${ \binom{m}{d} }/{ \binom{N}{d}} \leq( \frac{em}{N})^d$. 
\vspace{-5pt}\end{theorem}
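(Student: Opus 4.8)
The plan is to translate the event into the hyperplane arrangement of Section~\ref{geometry} and reduce ``unique ranking'' to a covering event about which of the $N$ bisecting hyperplanes are sampled. Write $C$ for the $d$-cell containing the reference $r_\sigma$. The $m$ sampled queries determine $\sigma$ uniquely if and only if $C$ coincides with the cell of the \emph{sub}-arrangement generated by the $m$ selected hyperplanes; equivalently, no other $d$-cell of the full arrangement agrees with $C$ on all $m$ sampled comparisons. So I would fix $\sigma$ and bound the probability over the random $m$-subset of hyperplanes.

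First I would record a necessary condition. If some facet (wall) $H$ of $C$ is not among the $m$ selected hyperplanes, then the neighboring $d$-cell obtained by crossing $H$ agrees with $C$ on every selected comparison, so the ranking is ambiguous. Here I use that the objects are in general position (assumption \textbf{A1}): two distinct bisectors meet in a set of dimension at most $d-2$, so a generic point of a facet lies on a single bisector and crossing it flips exactly one query. Hence uniqueness forces \emph{every} wall of $C$ to be a selected query. This is the one place where the dependencies among the bisectors emphasized in the introduction must be handled with care, but they only create concurrences of codimension $\geq 2$ and therefore do not disturb this facet-adjacency reduction.

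Next comes the geometric heart of the argument: every $d$-cell $C$ has at least $d$ walls. The reason is that no cell contains a full line. Along a line $x_0+tu$, the ordering of the objects by distance to the moving point is governed, as $t\to+\infty$, by decreasing order of $\langle \theta_i,u\rangle$ and, as $t\to-\infty$, by the \emph{reverse} order; these two rankings disagree for $n\geq 2$, so no single ranking can persist along an entire line. Thus each cell is a pointed polyhedron, it possesses a vertex $v$, and at least $d$ of its facet-bisectors meet at $v$. Combined with the previous paragraph, uniqueness requires at least these $d$ specific hyperplanes (determined by $\sigma$ through $v$) to be among the selected queries.

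Finally I would compute the probability. For fixed $\sigma$ the $d$ walls through $v$ form a fixed $d$-subset of the $N$ hyperplanes, and the chance that a uniformly random $m$-subset contains a fixed $d$-subset is $\binom{N-d}{m-d}/\binom{N}{m}=\binom{m}{d}/\binom{N}{d}$; since we are only requiring that \emph{at least} these be chosen, this yields $\P(\text{unique}\mid \sigma)\leq \binom{m}{d}/\binom{N}{d}$ for every $\sigma$ (with equality exactly for the simplicial-cone cells having precisely $d$ walls, which is enough for the paper's inefficiency conclusion). The closing estimate $\binom{m}{d}/\binom{N}{d}\leq (em/N)^d$ then follows from $\binom{m}{d}\leq (em/d)^d$ and $\binom{N}{d}\geq (N/d)^d$. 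I expect the pointedness claim, together with the clean facet-adjacency reduction in the presence of this non-generic arrangement, to be the main obstacle; once those two geometric facts are in hand, the probability calculation is routine.
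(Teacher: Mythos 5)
Your proposal is correct and follows essentially the same route as the paper: the paper's proof simply asserts that no fewer than $d$ hyperplanes bound each $d$-cell and that uniqueness requires selecting those specific queries, then performs the identical computation $\binom{N-d}{m-d}/\binom{N}{m}=\binom{m}{d}/\binom{N}{d}\leq (em/N)^d$. The only difference is that you supply justifications (the facet-crossing/adjacency argument and the pointedness claim via asymptotic rankings along a line, which under \textbf{A1} needs the objects to affinely span $\R^d$) for the geometric facts the paper states without proof.
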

\noindent{\em Proof.}
No fewer than $d$ hyperplanes bound each $d$-cell in the partition of $\R^d$ induced by all possible queries. The probability of selecting $d$ specific queries in a random draw of $m$ is equal to
\begin{align*}
{\binom{N-d}{m-d}}\bigg/{\binom{N}{m}} ={ \binom{m}{d} }\bigg/{ \binom{N}{d}} \leq \frac{m^d}{d!} \frac{d^d}{N^d} \leq \bigg( \frac{m}{N} \bigg)^d \frac{d^d}{d!} \leq  \bigg( \frac{em}{N} \bigg)^d. & \hfill & \Box
\end{align*}
Note that ${ \binom{m}{d} }/{ \binom{N}{d}} < 1/2$ unless  $m=\Omega(n^2)$. Therefore, if the queries are randomly chosen, then we will need to ask almost all queries to guarantee that the inferred ranking is probably correct.

\section{Analysis of sequential algorithm for query selection}
Now consider the basic sequential process of the algorithm in Figure~\ref{fig:alg}. Suppose we have ranked $k-1$ of the $n$ objects. Call these objects $1$ through $k-1$. This places the reference $r_{\sigma}$ within a $d$-cell (defined by the labels of the comparison queries between objects $1,\dots,k-1$). Call this $d$-cell $C_{k-1}$. Now suppose we pick another object at random and call it object $k$.  A comparison query between object $k$ and one of objects $1,\dots,k-1$ can only be informative (i.e., ambiguous) if the associated hyperplane intersects this $d$-cell $C_{k-1}$ (see Figure~\ref{ambiguous}). If $k$ is significantly larger than $d$, then it turns out that the cell $C_{k-1}$ is probably quite small and the probability that one of the queries intersects $C_{k-1}$ is very small; in fact the probability is on the order of $1/k^2$.

\subsection{Hyperplane-point duality}
Consider a hyperplane $\mathbf{h} = (h_0,h_1,\dots,h_d)$ with $(d+1)$ parameters in $\R^d$ and a point $\mathbf{p}=(p_1,\dots,p_d) \in \R^d$ that does not lie on the hyperplane. Checking which halfspace $\mathbf{p}$ falls in, i.e., $h_1 p_1 + h_2 p_2 + \dots +h_d p_d + h_0 \gtrless 0$,
%\begin{align} 
%0 \gtrless h_1 p_1 + h_2 p_2 + \dots +h_d p_d + h_0 \label{dual}
%\end{align}
has a dual interpretation: $\mathbf{h}$ is a point in $\R^{d+1}$ and $\mathbf{p}$ is a hyperplane in $\R^{d+1}$ passing through the origin (i.e., with $d$ free parameters). %Furthermore, because a constant can be multiplied to both sides of (\ref{dual}) without changing the inequality, we can normalize $\mathbf{h}$ by $||\mathbf{h}||$ to make all the points $\mathbf{h}$ lie on a sphere in $\R^{d+1}$. 

{Recall that each possible ranking can be represented by a reference point $r_{\sigma}  \in \R^d$. Our problem is to determine the ranking, or equivalently the vector of responses to the $\binom{n}{2}$ queries represented by hyperplanes in $\R^d$. Using the above observation, we see that our problem is equivalent to finding a labeling over $\binom{n}{2}$ points in $\R^{d+1}$ with as few queries as possible. We will refer to this alternative representation as the dual and the former as the primal.}

\subsection{Characterization of an ambiguous query} \label{ambigDef}
The characterization of an ambiguous query has interpretations in both the primal and dual spaces. We will now describe the interpretation in the dual which will be critical to our analysis of the sequential algorithm of Figure~\ref{fig:alg}.

\begin{definition} \cite{cover}
Let $S$ be a finite subset of $\R^d$ and let $S^+ \subset S$ be points labeled $+1$ and $S^- = S \setminus S^+$ be the points labeled $-1$ and let $x$ be any other point except the origin. If there exists two homogeneous linear separators of $S^+$ and $S^-$ that assign different labels to the point $x$,
then the label of $x$ is said to be {\em ambiguous with respect to }S.
\end{definition}

\begin{lemma}  \cite[Lemma 1]{cover} \label{coverSubspace}
The label of $x$ is ambiguous with respect to $S$ if and only if $S^+$ and $S^-$ are homogeneously linearly separable by a $(d-1)$-dimensional subspace containing $x$.\vspace{-5pt}\end{lemma}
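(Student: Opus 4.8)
The plan is to prove the biconditional in both directions, working entirely in the dual space $\R^d$ where $S$, $x$, and the homogeneous linear separators live. Recall a homogeneous linear separator is specified by a normal vector $w \in \R^d$ (a hyperplane through the origin), which assigns label $\operatorname{sign}(w^T s)$ to each $s \in S$. The point $x$ being \emph{ambiguous} means there exist two such separators $w_1, w_2$ that both correctly separate $S^+$ from $S^-$ but satisfy $\operatorname{sign}(w_1^T x) \neq \operatorname{sign}(w_2^T x)$. The goal is to show this is equivalent to the existence of a single separator whose hyperplane $\{u : w^T u = 0\}$ passes through $x$ (i.e. $w^T x = 0$), while still separating $S^+$ from $S^-$.

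\textbf{($\Leftarrow$) direction.} Suppose there is a normal vector $w$ with $w^T x = 0$ that separates $S^+$ and $S^-$; that is, $w^T s > 0$ for $s \in S^+$ and $w^T s < 0$ for $s \in S^-$ (assuming general position so no point of $S$ lies on the separating hyperplane). The idea is to perturb $w$ slightly in two opposite directions to push $x$ onto either side of the hyperplane without disturbing the (strict) separation of $S$. Concretely, I would set $w_{\pm} = w \pm \epsilon x$ for small $\epsilon > 0$. Then $w_{\pm}^T x = \pm \epsilon \|x\|^2$, which have opposite signs, so the two separators assign $x$ different labels. Meanwhile, because the inequalities $w^T s \gtrless 0$ are strict over the finite set $S$, there is a uniform margin, so for $\epsilon$ small enough $w_{\pm}^T s$ retains the same sign as $w^T s$ for every $s \in S$. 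Hence both $w_+$ and $w_-$ separate $S^+$ from $S^-$, witnessing that $x$ is ambiguous.

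\textbf{($\Rightarrow$) direction.} Conversely, suppose $x$ is ambiguous, witnessed by separators $w_1, w_2$ with $w_1^T x > 0 > w_2^T x$ (after relabeling). The plan is a continuity/intermediate-value argument along the line segment connecting $w_1$ and $w_2$. Consider $w(t) = (1-t) w_1 + t w_2$ for $t \in [0,1]$. The scalar function $t \mapsto w(t)^T x$ is continuous and changes sign between $t=0$ and $t=1$, so by the intermediate value theorem there is some $t^\star$ with $w(t^\star)^T x = 0$, i.e. the corresponding hyperplane passes through $x$. The delicate point is that I also need $w(t^\star)$ to still separate $S^+$ from $S^-$. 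This is where I expect the main obstacle: the convex combination $w(t)$ separates $S$ for every $t \in [0,1]$ because the set of normals correctly classifying a fixed point $s$ is a halfspace (hence convex), and the intersection over all $s \in S$ of these halfspaces is convex and contains both $w_1$ and $w_2$; therefore it contains the whole segment. Thus $w(t^\star)$ lies in this intersection and separates $S^+, S^-$, giving a separator through $x$ and completing the proof.

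The subtlety to handle carefully is the role of general position and strict versus non-strict separation: the convexity argument guarantees $w(t^\star)$ weakly separates $S$, but I want to conclude it is a genuine $(d-1)$-dimensional subspace ``containing $x$'' that separates the labeled sets in the intended sense of the lemma. Since no point of $S$ equals the origin and $S$ is in general position, the separating subspace for $w(t^\star)$ is well-defined and $(d-1)$-dimensional, and it passes through $x$ by construction. I would note that Lemma~\ref{coverSubspace} is cited from \cite{cover}, so the argument above essentially reconstructs Cover's original reasoning, and the whole thing reduces to the two short convexity/continuity observations rather than any heavy machinery.
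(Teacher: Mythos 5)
Your proof is correct, and in fact the paper offers no proof of this lemma at all---it is imported verbatim as Lemma~1 of Cover's 1965 paper---so the relevant comparison is with Cover's original argument, which yours essentially reconstructs: the perturbation $w_{\pm} = w \pm \epsilon x$ for the ($\Leftarrow$) direction and the convex interpolation $w(t)=(1-t)w_1+tw_2$ with the intermediate value theorem for ($\Rightarrow$). Two small remarks: the ``subtlety'' you flag about weak versus strict separation is actually a non-issue, since for each $s\in S$ both $w_1^T s$ and $w_2^T s$ have the same strict sign, and a convex combination of two strictly positive (or strictly negative) numbers retains that strict sign, so $w(t^\star)$ separates strictly with no extra appeal to general position. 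The one degenerate case worth a line is that $w(t^\star)$ could in principle be the zero vector, so that no $(d-1)$-dimensional subspace is defined; this is ruled out whenever $S$ is nonempty, because $w(t^\star)=0$ would force $w_2$ to be a negative multiple of $w_1$, and a normal and its negative multiple cannot both correctly classify any labeled point of $S$.
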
 %\footnotetext{Homogeneous linear separators are those linear separators that pass through the origin.} 
Let us consider the implications of this lemma to our scenario. Assume that we have labels for all the pairwise comparisons of $k-1$ objects. Next consider a new object called object $k$.  In the dual, the pairwise comparison between object $k$ and
object $i$, for some $i\in\{1,\dots,k-1\}$, is ambiguous if and only if there exists a hyperplane that still separates the original points and also passes through this new point. In the primal, this separating hyperplane corresponds to a point lying on the hyperplane defined by the associated pairwise comparison. 

\subsection{The probability that a query is ambiguous} \label{probAmbig}
An essential component of the sequential algorithm of Figure~\ref{fig:alg} is the initial random order of the objects; every sequence in which it could consider objects is equally probable. This allows us to state a nontrivial fact about the partial rankings of the first $k$ objects observed in this sequence.

\begin{lemma} \label{equiprobable}
Assume \textbf{A1-2} and $\sigma \sim \U$. Consider the subset $S \subset \Theta$ with $|S|=k$ that is randomly selected from $\Theta$ such that all $\binom{n}{k}$ subsets are equally probable. If $\Sigma_{k,d}$ denotes the set of possible rankings of these $k$ objects then every $\sigma\in \Sigma_{k,d}$ is equally probable.\vspace{-10pt}\end{lemma}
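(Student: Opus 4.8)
The plan is to pass to the geometric (primal) picture and reduce the claim to a counting identity describing how the $Q(n,d)$ full cells distribute among the cells of the sub-arrangement determined by $S$. Recall that $\sigma\sim\U$ is the same as placing the reference $r_\sigma$ in a uniformly chosen $d$-cell of the arrangement $\mathcal{A}_n$ of all $\binom{n}{2}$ bisecting hyperplanes, and that the ranking induced on a subset $S$ with $|S|=k$ is exactly the cell of the sub-arrangement $\mathcal{A}_S$ (the $\binom{k}{2}$ bisectors among objects of $S$) that contains $r_\sigma$. Writing $N(S,\tau)$ for the number of $d$-cells of $\mathcal{A}_n$ contained in the $\mathcal{A}_S$-cell realizing a ranking $\tau\in\Sigma_{k,d}$, the induced ranking equals $\tau$ with conditional probability $N(S,\tau)/Q(n,d)$. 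The statement to prove is therefore that, after averaging uniformly over the $\binom{n}{k}$ choices of $S$, every $\tau$ receives equal mass, i.e. $\binom{n}{k}^{-1}\sum_S N(S,\tau) = Q(n,d)/Q(k,d)$.

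First I would stress that the averaging over $S$ is indispensable: for a fixed $S$ the numbers $N(S,\tau)$ are genuinely unequal (already for $d=1$, three points, and $k=2$ one gets the split $1,3$ out of $Q(3,1)=4$), so the conclusion is false conditionally on $S$ and can only hold once the random subset --- equivalently the uniformly random insertion order used by the algorithm of Figure~\ref{fig:alg} --- is integrated out. This pins down exactly where the content of the lemma lies.

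The main approach I would take is induction on $n$, peeling off one uniformly random object at a time; since deleting a uniform random $(n-k)$-subset is the same as deleting uniform random objects one by one, it suffices to prove the single-step statement: if the ranking of an $m$-set $T$ is uniform over its achievable set and $\theta$ is a uniformly random element of $T$, then the ranking induced on $T\setminus\{\theta\}$ is uniform once we also average over $\theta$. Concretely, I would fix a target ranking $\tau$ of an $(m-1)$-set and count the pairs $(\sigma,\theta)$ producing $\tau$: the number of full cells of $\mathcal{A}_T$ lying inside the $\tau$-cell of $\mathcal{A}_{T\setminus\{\theta\}}$ measures how the $m-1$ bisectors attached to $\theta$ subdivide that cell, and the Coombs recursion $Q(m,d)=Q(m-1,d)+(m-1)Q(m-1,d-1)$ is precisely the global bookkeeping of these subdivisions as objects are added. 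The goal is to show that summing these subdivision counts over the removed object $\theta$ renders the per-cell totals independent of $\tau$.

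The hard part is exactly this last cancellation: for a fixed $\theta$ the subdivision counts are uneven, and one must show that their sum over all $\theta\in T$ is the same for every achievable $\tau$. I expect the cleanest route to be a double count of incidences between $d$-cells of $\mathcal{A}_T$ and the deleted hyperplanes, organized by the recursion, together with the hyperplane--point duality and Cover's Lemma~\ref{coverSubspace} to control which cells a bisector can cut (a cut event in the primal corresponds to a separating subspace through the dual point). A secondary technical point is that the achievable ranking set of $T\setminus\{\theta\}$ may vary with $\theta$, so the identification of rankings across different deleted objects (e.g.\ by their induced order on the common objects) must be set up carefully before the counts can be compared. Establishing the $\theta$-summed identity is the crux; the reduction, the base case $|T|=k$, and the final averaging are then routine.
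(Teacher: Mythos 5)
Your diagnosis of where the content of the lemma lies is correct and, in fact, sharper than the paper's own treatment: conditioned on a fixed subset $S$ the claim is false (your $d=1$, three-point, $k=2$ example with cell counts $1$ and $3$ out of $Q(3,1)=4$ checks out), so any proof must exploit the randomness of $S$. The paper's proof is essentially the three-sentence averaging argument you open with: each $d$-cell of the $n$-partition carries mass $1/Q(n,d)$, a cell of the $k$-partition contains ``on average'' $Q(n,d)/Q(k,d)$ of them, and ``therefore'' each $k$-cell has mass $Q(n,d)/Q(k,d)\times 1/Q(n,d)=1/Q(k,d)$. That is, the paper substitutes the average per-cell count for the actual per-cell count by assertion --- precisely the step you correctly identify as the entire crux. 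The trouble is that your proposal does not supply that step either: the $\theta$-summed cancellation identity on which your induction rests is only conjectured (``I expect the cleanest route to be a double count\dots''), with no argument given. As submitted, this is a proof plan whose central lemma is missing, not a proof.

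Beyond incompleteness, two parts of the plan would fail as formulated. First, the induction invariant is broken: your single-step statement hypothesizes that ``the ranking of the $m$-set $T$ is uniform over its achievable set,'' but by your own counterexample conditional uniformity fails at every intermediate level $m<n$; it holds only at the top level $T=\Theta$. So after the first removal the hypothesis of your single-step lemma is never satisfied, and the invariant you would have to propagate is itself an averaged-over-$T$ statement. Second, that averaged statement --- $\binom{n}{k}^{-1}\sum_S N(S,\tau)=Q(n,d)/Q(k,d)$, and likewise the claimed $\tau$-independence of the $\theta$-summed subdivision counts --- is not well-posed as written, because $\tau$ ranges over ranking sets of \emph{different} object collections as $S$ (or the removed object $\theta$) varies, and in $d\ge 2$ there is no canonical identification between them; your one-dimensional sanity check (lower-indexed-first vs.\ higher-indexed-first) leans on the collinear order, which has no higher-dimensional analogue. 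A workable formulation would quantify not a fixed $\tau$ but the quantity actually consumed downstream in Lemma~\ref{uniformLemma}: the expected conditional mass of the cells of the sub-arrangement intersected by a new bisecting hyperplane, averaged over the random subset; and there one must additionally control the correlation between which cells are intersected and how heavy they are, since the cell containing $r_\sigma$ is size-biased toward heavy cells. Until the cancellation identity is both stated in such a well-posed form and actually proved, the gap between your (correct) reduction and a complete proof remains.
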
\begin{proof}
Let a $k$-partition denote the partition of $\R^d$ into $Q(k,d)$ $d$-cells induced by $k$ objects for $1 \leq k \leq n$. In the $n$-partition, each $d$-cell is weighted uniformly and is equal to $1/Q(n,d)$. If we uniformly at random select $k$ objects from the possible $n$ and consider the $k$-partition, each $d$-cell in the $k$-partition will contain one or more $d$-cells of the $n$-partition. If we select one of these $d$-cells from the $k$-partition, on average there will be $Q(n,d)/Q(k,d)$ $d$-cells from the $n$-partition contained in this cell. Therefore the probability mass in each $d$-cell of the $k$-partition is equal to the number of cells from the $n$-partition in this cell multiplied by the probability of each of those cells from the $n$-partition: $Q(n,d)/Q(k,d) \times 1/Q(n,d) = 1/Q(k,d)$, and $|\Sigma_{k,d}|=Q(k,d)$.
\end{proof}
As described above, for $1 \leq i \leq  k$ some of the pairwise comparisons $q_{i,k+1}$ may be ambiguous. The algorithm chooses a random sequence of the $n$ objects in its initialization and does not use the labels of $q_{1,k+1},\dots,q_{j-1,k+1},q_{j+1,k+1},\dots,q_{k,k+1}$ to make a determination of whether or not $q_{j,k+1}$ is ambiguous. It follows that the events of requesting the label of $q_{i,k+1}$ for $i=1,2,\dots,k$ are independent and identically distributed (conditionally on the results of queries from previous steps). Therefore it makes sense to talk about the probability of requesting any one of them.

\begin{lemma} \label{uniformLemma}
Assume \textbf{A1-2} and $\sigma \sim \U$. Let $A(k,d,\U)$ denote the probability of the event that the pairwise comparison $q_{i,k+1}$ is ambiguous for $i=1,2,\dots,k$. Then there exists a positive, real number constant $a$ independent of $k$ such that for $k\geq2d$,  $A(k,d,\U) \leq a \frac{2d}{k^2}$. 
\end{lemma}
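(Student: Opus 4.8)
The plan is to bound the \emph{expected number} of ambiguous queries created when the $(k+1)$-st object is introduced, and then divide by $k$. As noted just before the statement, the events $\{q_{i,k+1}\text{ is ambiguous}\}$ for $i=1,\dots,k$ are (conditionally) identically distributed with common probability $A(k,d,\U)$, so by linearity of expectation $k\,A(k,d,\U)=\E_\U\!\left[\,\#\{\text{ambiguous }q_{i,k+1}\}\,\right]$. It therefore suffices to show that this expected count is at most $a\,2d/k$.

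First I would translate ambiguity into the primal geometry. Having ranked objects $1,\dots,k$ places the reference $r_\sigma$ in a single $d$-cell $C$ of the arrangement of the $\binom{k}{2}$ bisecting hyperplanes (the $k$-partition), and the query $q_{i,k+1}$ is ambiguous exactly when the new bisector $H_{i,k+1}$ between $\theta_i$ and $\theta_{k+1}$ cuts through $C$: if $H_{i,k+1}$ misses $C$, every reference in $C$ lies on the same side and the label is forced. Writing $m(C)$ for the number of the $k$ new hyperplanes $H_{1,k+1},\dots,H_{k,k+1}$ that meet $C$, the number of ambiguous queries equals $m(C)$. By Lemma~\ref{equiprobable}, conditioned on the first $k$ objects the cell $C$ is uniformly distributed over the $Q(k,d)$ cells of their $k$-partition, so
\[
\E_\U[\#\text{ambiguous}]=\frac{1}{Q(k,d)}\sum_{C} m(C).
\]

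Next I would bound $\sum_C m(C)$ via the counting recursion. The key elementary fact is that $m$ hyperplanes each meeting the interior of a convex cell $C$ subdivide it into at least $m+1$ pieces (each added hyperplane meets the interior of some current piece and splits it, raising the count by at least one); hence $m(C)\le c(C)-1$, where $c(C)$ is the number of sub-cells into which the $k$ new bisectors partition $C$. Since these $k$ bisectors together with the old $\binom{k}{2}$ form exactly the arrangement of all $\binom{k+1}{2}$ bisectors, every cell of the $(k+1)$-partition lies inside a unique old cell $C$, giving $\sum_C c(C)=Q(k+1,d)$ (this count is independent of which objects are chosen, by general position). Combining,
\[
\sum_C m(C)\le \sum_C\bigl(c(C)-1\bigr)=Q(k+1,d)-Q(k,d)=k\,Q(k,d-1),
\]
where the last equality is Lemma~\ref{counting}. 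Thus $\E_\U[\#\text{ambiguous}]\le k\,Q(k,d-1)/Q(k,d)$ and $A(k,d,\U)\le Q(k,d-1)/Q(k,d)$.

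Finally, Corollary~\ref{countCor} turns this ratio into the advertised bound: for $k>d+1$ the sandwiching constants give $Q(k,d-1)/Q(k,d)\le (k_2/k_1)\,2d/k^2$, so the conclusion holds with $a=k_2/k_1$ (e.g. $a=2$ once $k$ is large), and I would impose $k\ge 2d$ precisely so that $k>d+1$ and this asymptotic ratio is in force. The step I expect to be the main obstacle is the combinatorial-geometric heart, namely justifying $m(C)\le c(C)-1$ and identifying $\sum_C\bigl(c(C)-1\bigr)$ with the number of newly created cells $Q(k+1,d)-Q(k,d)$; everything else is either already supplied (Lemmas~\ref{counting} and~\ref{equiprobable}, Corollary~\ref{countCor}) or routine constant-chasing.
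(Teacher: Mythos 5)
Your proof is correct, and it reaches the lemma by a genuinely different combinatorial route than the paper. The paper works per hyperplane: it defines $P(k,d)$ as the number of $d$-cells of the $k$-partition crossed by a single new bisector $H_{i,k+1}$, bounds it by the general-position count $\sum_{i=0}^{d-1}\binom{K}{i}$ with $K=\binom{k}{2}$ via Cover's Theorem 3 applied to the arrangement induced on that hyperplane, sets $A(k,d,\U)=P(k,d)/Q(k,d)$, and finishes with Corollary~\ref{countCor}. You instead aggregate over all $k$ new bisectors: the elementary splitting bound $m(C)\le c(C)-1$, the exact identity $\sum_C c(C)=Q(k+1,d)$, and the recursion of Lemma~\ref{counting} give $\sum_C m(C)\le Q(k+1,d)-Q(k,d)=k\,Q(k,d-1)$, hence the clean intermediate bound $A(k,d,\U)\le Q(k,d-1)/Q(k,d)$; from there the two proofs coincide (Lemma~\ref{equiprobable} for uniformity over cells, Corollary~\ref{countCor} for the ratio, yielding $a=k_2/k_1$). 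Your route buys two things. First, an explicit constant and a sharper intermediate inequality stated purely in terms of the $Q$'s. Second, and more substantively, it avoids the delicate general-position claim in the paper's appendix: the old bisectors do \emph{not} cut $H_{i,k+1}$ as if in general position, since on $H_{i,k+1}$ points are equidistant from $\theta_i$ and $\theta_{k+1}$, so the traces of the bisectors of $(\theta_j,\theta_i)$ and $(\theta_j,\theta_{k+1})$ coincide; the paper's count is therefore valid only as an upper bound (which suffices, but is stated as an equality), whereas your argument never touches the induced arrangement. What the paper's route buys in exchange is a bound on the crossing count of each individual hyperplane rather than an average over the $k$ new queries; since the $k$ events are exchangeable with common probability $A(k,d,\U)$, your averaged version proves the lemma exactly as stated. (The only loose end is trivial: for $d=1$, $k=2$ the condition $n>d+1$ of Corollary~\ref{countCor} fails, but there $Q(2,0)/Q(2,1)=1/2=2d/k^2$ directly, so it folds into $a$.)
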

%\begin{align}
%a_1 \frac{2d}{k^2} \leq A(k,d,\U) \leq a_2 \frac{2d}{k^2}. \label{queryProb}
%\end{align}
%
\begin{proof}
By Lemma~\ref{coverSubspace}, a point in the dual (pairwise comparison) is ambiguous if and only if there exists a separating hyperplane that passes through this point. This implies that the hyperplane representation of the pairwise comparison in the primal intersects the cell containing $r_{\sigma}$ (see Figure~\ref{ambiguous} for an illustration of this concept). Consider the partition of $\R^d$ generated by the hyperplanes corresponding to pairwise comparisons between objects $1,\dots,k$.  Let $P(k,d)$ denote the number of $d$-cells in this partition that are  intersected by a hyperplane corresponding to one of the queries $q_{i,k+1}$, $i\in\{1,\dots,k\}$. {Then it is not difficult to show that $P(k,d)$ is bounded above by a constant independent of $n$ and $k$ times $\textstyle{ \frac{k^{2(d-1)}}{2^{d-1}(d-1)!}}$ (see Appendix~\ref{uniformLemmaProof}).} By Lemma~\ref{equiprobable}, every $d$-cell in the partition induced by the $k$ objects corresponds to an  equally probable ranking of those objects. Therefore, the probability that a query is ambiguous is the number of cells intersected by the corresponding hyperplane divided by the total number of $d$-cells, and therefore $A(k,d,\U)=\frac{ P(k,d) }{ Q(k,d) }$. The result follows immediately from the bounds on $P(k,d)$ and Corollary \ref{countCor}. \vspace{-5pt}
\end{proof}

Because the individual events of requesting each query are conditionally independent, the total number of queries requested by the algorithm is just $M_n = \textstyle\sum_{k=1}^{n-1} \sum_{i=1}^{k}  \mathbf{1} \{\mbox{Request } q_{i,k+1} \}$.  {Using the results above, it straightforward to prove the main theorem below (see Appendix~\ref{simpleTheoremProof}).}
\begin{theorem} \label{simpleTheorem} 
Assume \textbf{A1-2} and $\sigma \sim \U$. Let the random variable $M_n$ denote the number of pairwise comparisons that are requested in the algorithm of Figure~\ref{fig:alg}, then 
\begin{align*}
\E_\U[M_n] \leq \lceil 2d a \rceil \log_2 n.
\end{align*}
Furthermore, if $\sigma \sim \pi$ and $\max_{\sigma\in\Sigma_{n,d}} \pi_\sigma \leq c | \Sigma_{n,d}|^{-1}$ for some $c>0$, then $\E_\pi[M_n] \leq c \, \E_\U[M_n]$. 
%and with probability greater than $1-\delta$
%\begin{eqnarray*}
%M_n \leq 3d^2a_2^2 + 2 d a_2 \log n + \textstyle{\frac{2}{3}} \log 1/\delta + \sqrt{4 d a_2 \log n \log 1/\delta}.
%\end{eqnarray*}
\vspace{-5pt}\end{theorem}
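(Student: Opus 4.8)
The plan is to combine the per-query ambiguity bound of Lemma~\ref{uniformLemma} with the observation that the total query count $M_n$ decomposes into a double sum of conditionally i.i.d. indicator events. First I would take expectations of $M_n = \sum_{k=1}^{n-1}\sum_{i=1}^{k}\mathbf{1}\{\text{Request }q_{i,k+1}\}$ and push the expectation through the sums by linearity. For the inner sum over $i=1,\dots,k$, each indicator has probability exactly $A(k,d,\U)$ of being $1$ (this is precisely what Lemma~\ref{uniformLemma} computes, and the conditional-independence remark preceding the theorem guarantees the events share this common probability), so the inner sum contributes $k\cdot A(k,d,\U)$ in expectation.

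Next I would substitute the bound $A(k,d,\U)\le a\,\tfrac{2d}{k^2}$ from Lemma~\ref{uniformLemma}, valid for $k\ge 2d$, to get a telescoping-friendly upper bound $k\cdot a\,\tfrac{2d}{k^2} = \tfrac{2da}{k}$ on the $k$-th inner sum. Summing over $k$ then produces $2da\sum_{k}\tfrac{1}{k}$, and since the harmonic sum $\sum_{k=1}^{n-1}\tfrac{1}{k}\le \log_2 n$ (up to absorbing constants into the ceiling), I obtain $\E_\U[M_n]\le \lceil 2da\rceil\log_2 n$. The only subtlety is the small-$k$ regime $k<2d$, where Lemma~\ref{uniformLemma}'s bound does not apply; there I would note that the inner sum is at most $k\le 2d$ trivially, contributing an $O(d^2)$ additive term that is absorbed into the leading constant (or handle it by the $n\le d+1$ exact-count case of Corollary~\ref{countCor}). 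The rounding up to $\lceil\cdot\rceil$ covers these boundary contributions cleanly.

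For the second assertion, I would reduce the general-$\pi$ case to the uniform case. Writing $\E_\pi[M_n]=\sum_{\sigma}\pi_\sigma M_n(\sigma)$ and $\E_\U[M_n]=|\Sigma_{n,d}|^{-1}\sum_{\sigma}M_n(\sigma)$, the hypothesis $\pi_\sigma\le c\,|\Sigma_{n,d}|^{-1}$ lets me bound each weight termwise by $c$ times the uniform weight, yielding $\E_\pi[M_n]\le c\sum_\sigma |\Sigma_{n,d}|^{-1}M_n(\sigma)=c\,\E_\U[M_n]$ immediately. This step is purely a reweighting inequality and requires no new structural facts.

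The main obstacle I anticipate is not the algebra but justifying that the inner-sum expectation equals $k\cdot A(k,d,\U)$ rigorously: one must confirm that conditioning on the outcomes of previous steps leaves the ambiguity probability of $q_{i,k+1}$ equal to the marginal $A(k,d,\U)$, which relies on Lemma~\ref{equiprobable} (uniformity of the induced $k$-object ranking under a random subset) together with the algorithm's random initialization and the stated conditional independence of the request events. I would make sure the expectation is taken over both the random reference $\sigma\sim\U$ and the random ordering, and verify that $A(k,d,\U)=P(k,d)/Q(k,d)$ is indeed the correct conditional request probability at each step rather than merely an unconditional average.
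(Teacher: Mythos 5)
Your overall route for the first claim is the paper's own: decompose $M_n=\sum_{k=1}^{n-1}B_{k+1}$ with $B_{k+1}$ binomial with parameters $k$ and $A(k,d,\U)$ (justified, as you note, by Lemma~\ref{equiprobable} and the conditional independence of the request events), substitute Lemma~\ref{uniformLemma}, and sum the resulting harmonic series. For the second claim your argument is correct and in fact \emph{simpler} than the paper's: since $M_n(\sigma)\geq 0$, the termwise bound $\pi_\sigma\,\E[N_\sigma]\leq c\,|\Sigma_{n,d}|^{-1}\E[N_\sigma]$ summed over $\sigma$ gives $\E_\pi[M_n]\leq c\,\E_\U[M_n]$ immediately, whereas the paper runs an extremal argument (placing mass $c/Q(n,d)$ on the $Q(n,d)/c$ cells with largest expected query count and using $\E[N_i]\geq d$ on the remainder) to reach the same conclusion. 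Your reweighting needs no such machinery and works for any admissible $c$.

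The genuine gap is in the small-$k$ regime of the first claim. Bounding the early inner sums trivially by $k$ contributes $\sum_{k<\lceil 2da\rceil}k=\Theta(d^2a^2)$, and this additive term cannot be ``absorbed into the ceiling'': the total slack between $2da\ln n$ plus rounding and the stated bound $\lceil 2da\rceil\log_2 n$ is only $O(da\log_2 n)$, so absorption would require $\log_2 n=\Omega(da)$, i.e.\ $n$ exponentially large in $d$, a restriction the theorem does not impose. (Your fallback to the $n\leq d+1$ case of Corollary~\ref{countCor} does not help: that corollary counts rankings and says nothing about the algorithm's query count.) The paper closes this hole with an implementation detail you have not used: per Appendix~\ref{complexity}, the inner loop is realized as binary-sort insertion, so inserting object $j$ requests at most $\log_2 j$ queries no matter how many of the $j-1$ candidate queries are ambiguous. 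Hence the first $\lceil 2da\rceil$ objects cost at most $\lceil 2da\rceil\log_2\lceil 2da\rceil$, and the bound telescopes exactly: $\lceil 2da\rceil\log_2\lceil 2da\rceil+2da\log_2\big(n/\lceil 2da\rceil\big)\leq\lceil 2da\rceil\log_2 n$. Replacing your trivial $O(d^2)$ bound with this $m\log_2 m$ binary-insertion bound repairs the proof; the remainder of your argument, including the justification that $\E[B_{k+1}]=k\,A(k,d,\U)$, matches the paper.
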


\section{Robust sequential algorithm for query selection} \label{robust}
We now extend the algorithm of Figure~\ref{fig:alg} to situations in which the response to each query is only probably correct. If the correct label of a query $q_{i,j}$ is $y_{i,j}$, we denote the possibly incorrect response by $Y_{i,j}$. Let the probability that $Y_{i,j}=y_{i,j}$ be equal to $1-p$, $p<1/2$. The robust algorithm operates in the same fashion as the algorithm in Figure~\ref{fig:alg}, with the exception that when an ambiguous query is encountered several (equivalent) queries are made and a decision is based on the majority vote. We will now judge performance based on two metrics: (i) how many queries are requested and (ii) how accurate the estimated ranking is with respect to the true ranking before it was corrupted. For any two rankings $\sigma$, $\widehat{\sigma}$ we adopt the popular Kendell-Tau distance \cite{marden}
\begin{align} \label{kTau}
d_\tau(\sigma,\widehat{\sigma}) = \sum_{(i,j): \sigma(i) < \sigma(j)} \1\{ \widehat{\sigma}(j) < \widehat{\sigma}(i) \}
\end{align}
where $\1$ is the indicator function. Clearly, $d_\tau(\sigma,\widehat{\sigma}) = d_\tau(\widehat{\sigma},\sigma)$ and $0 \leq d_\tau(\sigma,\widehat{\sigma}) \leq \binom{n}{2}$. For any ranking $\sigma \in \Sigma_{n,d}$ we wish to find an estimate $\widehat{\sigma}\in \Sigma_{n,d}$ that is close in terms of  $d_\tau(\sigma,\widehat{\sigma})$ without requesting too many pairwise comparisons. For convenience, we will some times report results in terms of the proportion $\epsilon$ of incorrect pairwise orderings such that $d_\tau(\sigma,\widehat{\sigma}) \leq \epsilon \binom{n}{2}$. Using the equivalence of the Kendell-Tau and Spearman's footrule distances (see \cite{diaconisEquiv}), if $d_\tau(\sigma,\widehat{\sigma}) \leq \epsilon \binom{n}{2}$ then each object in $\widehat{\sigma}$ is, on average, no more than $O(\epsilon n)$ positions away from its position in $\sigma$. Thus, the Kendell-Tau distance is an intuitive measure of closeness between two rankings.

First consider the case in which each query can be repeated to obtain multiple independent responses (votes) for each comparison query.  This {\em random errors} model arises, for example, in social choice theory where the ``reference'' is a group of people, each casting a vote.
{The elementary proof of the next theorem is given in Appendix~\ref{iidNoiseProof}.}
\begin{theorem} \label{iidNoise} 
{Assume \textbf{A1-2} and $\textstyle{\sigma \sim \U}$ but that each response to the query $q_{i,j}$ is a realization of an i.i.d.\ Bernoulli random variable $Y_{i,j}$ with $P(Y_{i,j}\neq y_{i,j}) \leq p < 1/2$ for all distinct $i,j \in \{1,\dots,n\}$. If all ambiguous queries are decided by the majority vote of $R$ independent responses to each such query, then with probability greater than $1-2 n \log_2 (n) \exp(-\frac{1}{2}(1-2p)^2R)$ this procedure correctly identifies the correct ranking (i.e. $\epsilon = 0$) and requests no more than $O(R d \log n)$ queries on average.}
\vspace{-5pt}\end{theorem}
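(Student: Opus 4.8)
The plan is to reduce the correctness guarantee to the single event that every majority-vote decision made by the algorithm is correct. First I would argue that on this event the robust algorithm produces exactly the noiseless output. The only thing the algorithm reacts to is the label it assigns to each ambiguous query; given correct labels for all ambiguous queries, the imputation step is deterministic and, by Assumption \textbf{A2}, reproduces the true labels of all non-requested comparisons. Hence the recovered ranking coincides with $\sigma$ and $d_\tau(\sigma,\widehat{\sigma})=0$, i.e.\ $\epsilon=0$. This also pins down the trajectory: by induction on the insertion order, as long as no majority vote has erred the cell $C_{k-1}$ maintained by the noisy algorithm equals the one maintained by the algorithm of Figure~\ref{fig:alg}, so the two algorithms request exactly the same (ambiguous) queries.

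Next I would control the probability of this good event. A single ambiguous query is resolved by a majority vote over $R$ i.i.d.\ responses, each correct with probability at least $1-p>1/2$; writing the number of erroneous responses as a sum of $R$ independent $[0,1]$ variables and applying Hoeffding's inequality shows the vote is wrong with probability at most $\exp(-\tfrac12(1-2p)^2 R)$, which matches the exponent in the statement. I would then take a union bound over all majority-vote decisions made during the run. The prefactor $2n\log_2 n$ is precisely an upper bound on the number of such decisions, i.e.\ on the number of ambiguous queries the algorithm can encounter; combining the per-vote bound with this count yields the claimed failure probability $2n\log_2(n)\exp(-\tfrac12(1-2p)^2 R)$.

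For the query complexity I would work on the good event, where the set of ambiguous queries is identical to that of the noiseless algorithm and therefore has size $M_n$. Each ambiguous query now costs $R$ responses, so the number of requests is $R\,M_n$, and by Theorem~\ref{simpleTheorem} its expectation is $R\,\E_\U[M_n]\le R\lceil 2da\rceil\log_2 n = O(Rd\log n)$. To turn this into an unconditional average I would bound the contribution of the complementary (bad) event, on which the number of requests is at most $R\binom{n}{2}$; since that event has probability $O(n\log n)\exp(-\tfrac12(1-2p)^2 R)$, its contribution is negligible in the relevant range of $R$.

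The main obstacle is the feedback between the algorithm's decisions and the geometry: whether a query is ambiguous depends on the previously assigned labels, so a single erroneous majority vote can move the inferred reference into a different (or inconsistent) cell and change which subsequent queries are requested. Controlling this is exactly what the coupling argument above buys us — the noisy and noiseless trajectories agree verbatim until the first vote error — so that both the exact-recovery claim and the query count can be read off from the noiseless analysis. A secondary delicate point is justifying the $O(n\log n)$ bound on the number of majority votes entering the union bound, rather than the much smaller \emph{expected} count $O(d\log n)$ that drives the query complexity; because for adversarial $\sigma$ the number of ambiguous queries can be large, I would secure this factor through a worst-case or high-probability bound on the number of narrowing steps incurred per object insertion.
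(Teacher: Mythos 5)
Your proposal is correct and takes essentially the same route as the paper: a one-sided Hoeffding/Chernoff bound gives the per-vote failure probability $\exp\left(-\tfrac{1}{2}(1-2p)^2R\right)$, a union bound is taken over the at most $n\log_2 n$ queries the algorithm ever considers, and the $O(Rd\log n)$ query count is $R$ times the noiseless bound of Theorem~\ref{simpleTheorem}. The coupling-until-first-error argument and the bad-event accounting you add are details the paper leaves implicit, and the delicate point you flag — bounding the \emph{number} of majority votes in the union bound by $O(n\log n)$ rather than by the expected count — is resolved exactly as you suggest, deterministically, by the binary-sort insertion described in Appendix~\ref{complexity}, which considers at most $\log_2 n$ queries per inserted object.
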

We can deduce from the above theorem that to exactly recover the true ranking under the stated conditions with probability $1-\delta$, one need only request $O\big(d (1-2p)^{-2} \log^2(n / \delta) \big)$ pairwise comparisons, on average.

In other situations,  if we ask the same query multiple times we may get the same, possibly incorrect, response each time. This {\em persistent errors} model is natural, for example, if the reference is a {\em single} human. Under this model, if two rankings differ by only a single pairwise comparison, then they cannot be distinguished with probability greater than $1-p$. So, in general,  exact recovery of the ranking cannot be guaranteed with high probability. The best we can hope for is to exactly recover a {\em partial ranking} of the objects (i.e. the ranking over a subset of the objects) or a ranking that is merely probably approximately correct in terms of the Kendell-Tau distance of $(\ref{kTau})$. We will first consider the task of exact recovery of a partial ranking of objects and then turn our attention to the recovery of an approximate ranking.  Henceforth, we will assume the errors are persistent.

\subsection{Robust sequential algorithm for persistent errors} \label{alg2Sec}
The robust query selection algorithm for persistent errors is presented in Figure~\ref{fig:alg2}. The key ingredient in the persistent errors setting is the design of a voting set for each ambiguous query encountered.  Suppose the query $q_{i,j}$ is ambiguous in the algorithm of Figure~\ref{fig:alg}.  In principle, a voting set could be constructed using objects ranked between $i$ and $j$.  If object $k$ is between $i$ and $j$, then note that $y_{i,j}=y_{i,k}=y_{k,j}$.  In practice, we cannot identify the subset of objects ranked between $i$ and $j$ exactly, but we can find a set that contains them.  For an ambiguous query $q_{i,j}$ define
\begin{align} \label{Tij}
T_{i,j} := \{ k \in \{1\,\dots,n\} \, : \, \mbox{ $q_{i,k}$, $q_{k,j}$, or both are ambiguous}\}.
\end{align}
Then $T_{i,j}$ contains all objects ranked between $i$ and $j$ (if $k$ is ranked between $i$ and $j$, and $q_{i,k}$ and $q_{k,j}$ are unambiguous, then so is $q_{i,j}$, a contradiction).  Furthermore, if the first $j-1$ objects ranked in the algorithm were selected uniformly at random (or initialized in a random order in the algorithm) Lemma~\ref{equiprobable} implies that each object in $T_{i,j}$ is ranked between $i$ and $j$ with probability at least $\textstyle{1/3}$ due to the uniform distribution over the rankings $\Sigma_{n,d}$ (see Appendix~\ref{robustThmProof} for an explanation). $T_{i,j}$ will be our voting set. If we follow the sequential procedure of the algorithm of Figure~\ref{fig:alg2}, the first query encountered, call it $q_{1,2}$, will be ambiguous and $T_{1,2}$ will contain all the other $n-2$ objects. However, at some point for some query $q_{i,j}$ it will become probable that the objects $i$ and $j$ are closely ranked. In that case, $T_{i,j}$ may be rather small, and so it is not always possible to find a sufficiently large voting set to accurately determine $y_{i,j}$.  Therefore, we must specify a size-threshold $R\geq 0$.  If the size of $T_{i,j}$ is at least $R$, then we draw $R$ indices from $T_{i,j}$ uniformly at random without replacement, call this set $\{ t_l \}_{l=1}^R$, and decide the label for $q_{i,j}$ by voting over the responses to $\{q_{i,k},q_{k,j}: k\in \{ t_l \}_{l=1}^R \}$; otherwise we pass over object $j$ and move on to the next object in the list. Given that $|T_{i,j}| \geq R$ the label of $q_{i,j}$ is determined by:
\begin{align} \label{voteRule}
0 \, \, \mathop{\gtreqless}_{j \prec i}^{i \prec j} \, \,  \sum_{k \in \{ t_l \}_{l=1}^R}  \1 \{ Y_{i,k}=1 \wedge Y_{k,j} =1 \} - \1 \{ Y_{i,k}=0 \wedge Y_{k,j}=0\}.
\end{align}
In the next section we will analyze this algorithm and show that it enjoys a very favorable query complexity while also admitting a probably approximately correct ranking.

\begin{figure}%
\centering
\parbox{3.8in}{%
%\vspace{-.34in}
\fbox{\parbox[b]{3.7in}{{\underline{\bf Robust Query Selection Algorithm}}  \\ \vspace{-.1in} \\
input: $n$ objects in $\R^d$, $R \geq 0$\\
initialize: objects $\Theta = \{\theta_1,\dots,\theta_n\}$ in uniformly random order, $\Theta' = \Theta$ \\ \vspace{-.1in} \\
for j=2,\dots,n \\
\mbox{ \   } for i=1,\dots,j-1 \\
\mbox{ \ \ \ \ \bf if} $q_{i,j}$ is {\em ambiguous}, \\
%\mbox{ \ \ \ \ \ } construct $T_{i,j}$ of $(\ref{Tij})$ \\
\mbox{ \ \ \ \ \ } $T_{i,j} := \{ k \in \{1\,\dots,n\} \, : \, \mbox{ $q_{i,k}$, $q_{k,j}$, or both are ambiguous}\}$ \\
\mbox{ \ \ \ \ \ \ \bf if} $|T_{i,j}| \geq R$ \\
\mbox{ \ \ \ \ \ \ \ } $\{t_l \}_{l=1}^R \stackrel{i.i.d.}{\sim} \mbox{uniform}(T_{i,j})$.  \\
\mbox{ \ \ \ \ \ \ \ } request $Y_{i,k}$, $Y_{k,j}$ for all $k \in \{t_l \}_{l=1}^R$ \\
\mbox{ \ \ \ \ \ \ \ } decide label of $q_{i,j}$ with $(\ref{voteRule})$\\
\mbox{ \ \ \ \ \ \ \bf else} \\
\mbox{ \ \ \ \ \ \ \ } $\Theta' \leftarrow \Theta' \setminus \theta_j$ \,,\, \, $j \leftarrow j+1$ \\
\mbox{ \ \ \ \ \bf else} \\
\mbox{ \ \ \ \ \ } impute $q_{i,j}$'s label from previously labeled queries. \\ \vspace{-.1in} \\
output: ranking over objects in $\Theta'$
}}
\caption{Robust sequential algorithm for selecting queries of Section~\ref{alg2Sec}.  See Figure~\ref{ambiguous} and Section~\ref{ambigDef} for the definition of an ambiguous query. }
\label{fig:alg2}
}%
\end{figure}%

\subsection{Analysis of the robust sequential algorithm}  \label{partialRankingRecov}

Consider the robust algorithm in Figure~\ref{fig:alg2}. At the end of the process, some objects that were passed over may then be unambiguously ranked (based on queries made after they were passed over) or they can be ranked without voting (and without guarantees). As mentioned in Section~\ref{alg2Sec}, if the first $j-1$ objects ranked in the algorithm of Figure~\ref{fig:alg2} were chosen uniformly at random from the full set (i.e., none of the first $j-1$ objects were passed over) then there is at least a one in three chance each object in $T_{i,j}$ for some ambiguous query $q_{i,j}$ is ranked between $i$ and $j$. With this in mind, we have the following theorem, proved in Appendix~\ref{robustThmProof}. 

\begin{theorem}  \label{robustThm}
Assume \textbf{A1-2}, $\sigma \sim \U$, and $\textstyle{P(Y_{i,j}\neq y_{i,j}) = p}$. For every set $T_{i,j}$ constructed in the algorithm of Figure~\ref{fig:alg2}, assume that an object selected uniformly at random from $T_{i,j}$ is ranked between $\theta_i$ and $\theta_j$ with probability at least $1/3$. Then for any size-threshold $R\geq 1$, with probability greater than $1-2\textstyle{n \log_2 (n) \exp \big( -\frac{2}{9} (1-2p)^2 R \big)}$ the algorithm correctly ranks at least $n / (2R+1)$ objects and requests no more than $O(R d \log n) $ queries on average. 
\end{theorem}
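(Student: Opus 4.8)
The statement bundles three essentially separate claims about the algorithm of Figure~\ref{fig:alg2}: (i) every label decided by the vote $(\ref{voteRule})$ is correct with high probability, (ii) the algorithm requests only $O(Rd\log n)$ labels on average, and (iii) at least $n/(2R+1)$ objects survive into the output ranking. I would prove each in turn, combining (i) with a union bound to control the global failure probability, while (ii) and (iii) are statements about the control flow of the algorithm (an expectation bound and a deterministic counting bound, respectively).

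For claim (i), fix an ambiguous query $q_{i,j}$ that reaches the voting branch and assume without loss of generality that the true order has $\theta_i \prec \theta_j$. For a drawn index $k$ let $V_k := \1\{Y_{i,k}=1 \wedge Y_{k,j}=1\} - \1\{Y_{i,k}=0 \wedge Y_{k,j}=0\} \in \{-1,0,1\}$ denote its contribution to $(\ref{voteRule})$. If $k$ is ranked between $\theta_i$ and $\theta_j$ then the correct responses are $y_{i,k}=y_{k,j}=1$, so $\E[V_k]=(1-p)^2-p^2=1-2p$; if $k$ lies outside the interval then exactly one of $y_{i,k},y_{k,j}$ differs from the ``$++$''/``$--$'' patterns and a short calculation gives $\E[V_k]=0$. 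Using the hypothesis that a uniformly drawn element of $T_{i,j}$ is between $i$ and $j$ with probability at least $1/3$, each draw therefore satisfies $\E[V_k]\ge \tfrac{1}{3}(1-2p)$. Because the $R$ indices are drawn without replacement they are distinct, and under persistent errors the responses on distinct object pairs are independent, so $\sum_k V_k$ is a sum of independent bounded variables with a positive margin. A Hoeffding bound then yields a per-decision error probability of at most $\exp(-\tfrac{2}{9}(1-2p)^2 R)$; I would be careful about the effective range of $V_k$ here, since that is precisely what fixes the constant $\tfrac29$ in the statement.

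To pass from one decision to all of them I would bound the number of voting decisions in the whole run. As object $j$ is located among the previously ranked objects it can trigger at most $O(\log n)$ ambiguous comparisons (a binary-search-type bound on how many distinct cells its bisecting hyperplanes can cross), so there are at most $O(n\log n)$ voting decisions overall; a union bound over them, with a factor $2$ for the two-sided error, gives the stated $1-2n\log_2 n\,\exp(-\tfrac{2}{9}(1-2p)^2 R)$. For the query complexity of claim (ii), each voting decision requests $2R$ labels, and the expected number of ambiguous queries is $O(d\log n)$ by Theorem~\ref{simpleTheorem} (passing objects over only removes them and can only reduce the number of ambiguous comparisons encountered). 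Multiplying these yields the claimed $O(Rd\log n)$ expected number of requested labels.

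The remaining and, in my view, hardest claim is (iii). Here I would argue by packing in the true total order: an object is passed over only when it reaches an ambiguous $q_{i,j}$ whose voting set $T_{i,j}$ — which by construction contains every object ranked strictly between $\theta_i$ and $\theta_j$ — has fewer than $R$ elements, so a passed-over object is always crowded into a short interval. I would show that between any two consecutive surviving objects in the true order at most $2R$ objects can have been passed over, so that the $n$ objects are covered by the survivors together with their at-most-$2R$ passed-over neighbours, giving $|\Theta'|\ge n/(2R+1)$. The delicate point — and the main obstacle — is that objects are removed \emph{dynamically}, so both $T_{i,j}$ and the notion of ``between'' shift as the algorithm proceeds; making the bound of $2R$ rigorous requires tracking how earlier removals affect the ambiguity status and voting-set sizes of later comparisons, which is exactly where the uniformity of the induced partial rankings (Lemma~\ref{equiprobable}) and the one-in-three hypothesis are invoked.
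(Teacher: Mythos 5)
Your parts (i) and (ii) are essentially the paper's own proof. The paper uses the same statistic $E_{i,j}^k = \1\{Y_{i,k}=1 \wedge Y_{k,j}=1\} - \1\{Y_{i,k}=0 \wedge Y_{k,j}=0\}$ (your $V_k$), the same computation giving mean $1-2p$ for objects truly between $\theta_i$ and $\theta_j$ and mean $0$ otherwise, the one-in-three hypothesis to get the margin $(1-2p)/3$, Hoeffding for the per-decision failure probability $\exp(-\frac{2}{9}(1-2p)^2R)$, a union bound over the at most $n\log_2 n$ queries considered (the binary-sort implementation of Appendix~\ref{complexity}), and the observation that each voting decision multiplies the error-free query count of Theorem~\ref{simpleTheorem} by $O(R)$ (two responses per sampled object), yielding $O(Rd\log n)$ on average. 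Your caution about the range of $V_k$ is warranted: with $V_k \in [-1,1]$ a textbook Hoeffding application gives exponent $(1-2p)^2R/18$ rather than $\frac{2}{9}(1-2p)^2R$, and the paper's own proof shares this looseness; it affects only the constant.

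The genuine gap is claim (iii), which you defer and, more importantly, misdiagnose. No tracking of the dynamics, no appeal to Lemma~\ref{equiprobable}, and no one-in-three hypothesis is needed there: the counting bound is deterministic on the event that all voted labels are correct. The key fact, already established in Section~\ref{alg2Sec}, is the containment $S_{i,j} \subseteq T_{i,j}$, where $S_{i,j}$ is the set of objects truly ranked between $\theta_i$ and $\theta_j$: if $k \in S_{i,j}$ and both $q_{i,k}$ and $q_{k,j}$ were unambiguous, then $y_{i,j}$ would be imputable, contradicting the ambiguity of $q_{i,j}$. Hence when object $j$ is passed over because some ambiguous $q_{i,j}$ has $|T_{i,j}| < R$, the already-ranked object $\theta_i$ satisfies $|S_{i,j}| < R$, i.e., $\theta_j$ lies within $R$ positions of a ranked object \emph{in the true ranking}. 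The paper then concludes with a covering argument: the $R$-balls (in ranking position) around the ranked objects $\Theta'$ must cover $\{1,\dots,n\}$ — an uncovered position would correspond to a passed-over object $\theta_j$ with $|S_{i,j}| > R$ for every $\theta_i \in \Theta'$, hence $|T_{i,j}| > R$, hence $\theta_j \in \Theta'$, a contradiction — and since each ball covers at most $2R+1$ positions, $|\Theta'| \geq n/(2R+1)$. Your packing formulation (at most $2R$ skipped objects per survivor) is the same bound, but the ``delicate point'' you anticipate dissolves: the containment $S_{i,j} \subseteq T_{i,j}$ holds at whatever moment the pass-over occurs, irrespective of which objects were removed earlier, so the argument is a one-liner once that containment is in hand.
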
 

Note that before the algorithm skips over an object for the first time, all objects that are ranked at such an intermediate stage are a subset chosen uniformly at random from the full set of objects, due to the initial randomization. Therefore, if $T_{i,j}$ is a voting set in this stage, an object selected uniformly at random from $T_{i,j}$ is ranked between $\theta_i$ and $\theta_j$ with probability at least $1/3$, per Lemma~\ref{equiprobable}.  After one or more objects are passed over, however, the distribution is no longer necessarily uniform due to this action, and so the assumption of the theorem above may not hold.  The procedure of the algorithm is still reasonable, but it is difficult to give guarantees on performance without the assumption. Nevertheless, this discussion leads us to wonder how many objects the algorithm will rank before it skips over its first object.  The next lemma is proved in Appendix~\ref{seqBoundProof}.

\begin{lemma} \label{seqBound}
Consider a ranking of $n$ objects and suppose objects are drawn sequentially, chosen uniformly at random without replacement. If $M$ is the largest integer such that $M$ objects are drawn before any object is within $R$ positions of another one in the ranking, then $M \geq \sqrt{\frac{n/R}{6 \log(2)}}$ with probability at least $\frac{1}{6 \log(2)}  \big( e^{-(\sqrt{6 \log(2)   R/n} +1)^2 /2}  - 2^{-n / (3R)} \big)$. As $n/R \rightarrow \infty$,  ${P(M \geq \sqrt{\frac{n/R}{6 \log(2)}}) \rightarrow \frac{1}{6 \sqrt{e} \log(2)}}$.
\end{lemma}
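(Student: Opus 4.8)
The plan is to reduce the statement to a clean combinatorial quantity and then bound it with elementary inequalities. First I would exploit the random initialization: since objects are drawn uniformly at random without replacement, the ranking positions of the first $m$ drawn objects form a uniformly random $m$-element subset of $\{1,\dots,n\}$. Reading ``within $R$ positions of another'' as a position-difference of at most $R$, the event $\{M \geq m\}$ is exactly the event that this random $m$-subset has minimum gap strictly greater than $R$. Thus everything reduces to lower bounding $p_m := \P(\text{min gap} > R)$ for a uniform random $m$-subset and then substituting $m = \sqrt{(n/R)/(6\log 2)}$.

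Second, I would compute $p_m$ exactly. The standard spacing bijection $x_i \mapsto x_i - (i-1)R$ maps strictly increasing $m$-tuples with all consecutive gaps $\geq R+1$ bijectively onto arbitrary strictly increasing $m$-tuples in $\{1,\dots,n-(m-1)R\}$, so
\[
p_m = \frac{\binom{n-(m-1)R}{m}}{\binom{n}{m}} = \prod_{i=0}^{m-1}\frac{n-(m-1)R-i}{n-i} = \prod_{i=0}^{m-1}\Big(1 - \frac{(m-1)R}{n-i}\Big).
\]
One can instead obtain a product lower bound sequentially: conditioned on the first $j$ draws being spread out, the next draw avoids the at most $2Rj$ forbidden positions with probability at least $1 - 2Rj/(n-j)$.

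Third, I would lower bound this product using the elementary convexity inequality $1-x \geq 2^{-2x}$, valid for $x \in [0,1/2]$; this is the source of the base-$2$ logarithm constants appearing in the statement. I would first check that each $x_i = (m-1)R/(n-i) \leq 1/2$ over the relevant range (true once $n/R$ is large, since $(m-1)R \ll n$), then bound the product by $2^{-2(m-1)R\sum_{i=0}^{m-1}1/(n-i)}$ and control the harmonic-type sum $\sum_{i=0}^{m-1}1/(n-i)$ from above. Substituting $m^2 R/n = 1/(6\log 2)$ collapses the leading exponent to the constant $1/(6\log 2)$ and, after conservative handling of the finite-$n$ residual, produces both the Gaussian-type factor $e^{-(\sqrt{6\log 2\, R/n}+1)^2/2}$ (with $\sqrt{6\log 2\,R/n}=1/m$ the standardized deviation and the $+1$ a boundary/continuity correction) and the subtracted tail term $2^{-n/(3R)}$. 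Finally, letting $n/R \to \infty$ sends $1/m \to 0$ and $2^{-n/(3R)} \to 0$, so the bound tends to $\frac{1}{6\log 2}e^{-1/2} = \frac{1}{6\sqrt e \log 2}$, as claimed.

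The main obstacle is not the reduction or the limit, which are routine, but reproducing the exact finite-$n$ closed form. The true value of $p_m$ for this choice of $m$ is of order $e^{-m^2R/n} = e^{-1/(6\log 2)} \approx 0.79$, whereas the stated bound tends to $\approx 0.15$; the statement is therefore a deliberately loose lower bound, and the difficulty is to select bounding steps whose slack matches the specific expression given rather than a sharper exponential. Care is also needed near $i = m-1$, where $x_i$ is largest: one must keep the inequality $1-x \geq 2^{-2x}$ applicable there and absorb the resulting boundary terms into the stated $2^{-n/(3R)}$ correction.
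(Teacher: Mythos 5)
Your reduction is correct and genuinely different from the paper's route: the event $\{M \geq m\}$ is exactly the event that a uniformly random $m$-subset of $\{1,\dots,n\}$ has all pairwise position-differences exceeding $R$, and your spacing bijection gives the exact tail $\P(M \geq m) = \binom{n-(m-1)R}{m}\big/\binom{n}{m}$. The paper never touches the tail directly. Instead it lower-bounds each point mass, $\P(M=m) \geq \prod_{l=1}^{m-1}\bigl(1-\frac{2Rl}{n-l}\bigr)\frac{Rm}{n}$ (survive $m$ draws, then fail at draw $m+1$, using $\frac{Rm}{n} < p_m \leq \frac{2Rm}{n-m}$ for the conditional failure probability), reduces this via $(1-x)\geq 2^{-2x}$ to $\frac{R}{n}m\exp\{-3\log(2)\frac{R}{n}m^2\}$, and then sums over $m$ from $\lceil a\rceil$ to $n/(3R)-1$, comparing the sum to $\int_{a+1}^{n/(3R)}\frac{R}{n}x\,e^{-3\log(2)\frac{R}{n}x^2}dx$. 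The two terms in the stated bound are precisely the evaluations of this integral at its endpoints: the lower limit $a+1$ produces the Gaussian factor $e^{-(\sqrt{6\log(2)R/n}+1)^2/2}$ (your ``$+1$ boundary correction'' is literally the shift from $a$ to $a+1$ in the integration limit), the upper limit $n/(3R)$ produces the subtracted $2^{-n/(3R)}$, and the prefactor $\frac{1}{6\log 2}$ is the $\frac{1}{2\alpha}$ from integrating $x e^{-\alpha x^2}$.

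This is where your proposal has a genuine gap: step three, which promises to recover the exact stated expression from your single product bound by ``conservative handling of the finite-$n$ residual,'' cannot succeed for structural reasons you half-acknowledge but do not resolve. Your bound is one exponential, $2^{-2(m-1)R\sum_{i<m} 1/(n-i)} \approx 2^{-1/(3\log 2)} \approx 0.72$ at $m=a$; the stated bound is a \emph{difference} of two exponentials with a $\frac{1}{6\log 2}$ prefactor, and those features encode the integration limits of the paper's tail sum --- data your computation never generates, so no slack-tuning reproduces them. The repair within your approach is to abandon expression-matching: since the lemma asserts only a lower bound, it suffices to verify that your bound dominates the stated quantity for all $n,R$ in range. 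The stated expression is uniformly at most $\frac{1}{6\sqrt{e}\log 2}\approx 0.146$, while your product bound stays well above that (around $0.7$) once you handle the ceiling $m=\lceil a\rceil$ and check small $n/R$ separately; completing that comparison yields a full and in fact sharper proof of the inequality. Finally, your observation that the true tail probability tends to $e^{-1/(6\log 2)}\approx 0.79$ is correct and worth stating plainly: the lemma's closing limit assertion is, as the paper's own proof reveals, a statement about the lower-bound expression rather than about $\P(M\geq a)$ itself, and your analysis shows the literal claim understates the probability by a factor of roughly five.
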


Lemma~\ref{seqBound} characterizes how many objects the robust algorithm will rank before it passes over its first object because if there are at least $R$ objects between every pair of the first $M$ objects, then $T_{i,j} \geq R$ for all distinct $i,j \in \{1,\dots,M\}$ and none of the first $M$ objects will be passed over.  We can conclude from Lemma~\ref{seqBound} and Theorem~\ref{robustThm} that with constant probability (with respect to the initial ordering of the objects and the randomness of the voting), the algorithm of Figure~\ref{fig:alg2} exactly recovers a partial ranking of at least ${\Omega(\sqrt{ (1-2p)^2 n  / \log n})}$ objects by requesting just ${O\big(d  (1-2p)^{-2}\log^2n  \big)}$ pairwise comparisons, on average, with respect to all the rankings in $\Sigma_{n,d}$. If we repeat the algorithm with different initializations of the objects each time, we can boost this constant probability to an arbitrarily high probability (recall that the responses to queries will not change over the repetitions). Note, however, that the correctness of the partial ranking does not indicate how approximately correct the remaining rankings will be. If the algorithm of Figure~\ref{fig:alg2} ranks $m$ objects before skipping over its first, then the next lemma quantifies how accurate an estimated ranking is in terms of Kendel-Tau distance, given that it is some ranking in $\Sigma_{n,d}$ that is consistent with the probably  correct partial ranking of the first $m$ objects (the output ranking of the algorithm may contain more than $m$ objects but we make no guarantees about these additional objects). The proof is available in Appendix~\ref{goodNewsProof}.
\begin{lemma} \label{goodNews}
Assume \textbf{A1-2} and $\sigma \sim \U$. Suppose we select $1 \leq m < n$ objects uniformly at random from the $n$ and correctly rank them amongst themselves. If $\widehat{\sigma}$ is any ranking in $\Sigma_{n,d}$ that is consistent with all the known pairwise comparisons between the $m$ objects, then $\E[ d_\tau(\sigma,\widehat{\sigma}) ] = O( d / m^2 ) \binom{n}{2}$, where the expectation is with respect to the random selection of objects and the distribution of the rankings $\U$.
\end{lemma}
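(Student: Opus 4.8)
The plan is to bound the Kendall--Tau distance by the number of pairwise comparisons left \emph{ambiguous} by the known ranking of the $m$ objects, and then to compute the expectation of this count using the uniform-cell property of Lemma~\ref{equiprobable} together with the cell-counting estimates underlying Lemma~\ref{uniformLemma} and Corollary~\ref{countCor}.

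First I would reduce the distance to a counting problem. Let $S$ be the random set of $m$ objects and let $C_S$ be the $d$-cell of the arrangement generated by the $\binom{m}{2}$ bisecting hyperplanes of $S$ that contains $r_\sigma$; knowing the correct ranking of the $m$ objects is exactly knowing $C_S$. Any $\widehat{\sigma} \in \Sigma_{n,d}$ consistent with these comparisons has its reference point in the same cell $C_S$, so for every pair $(i,j)$ whose bisector $H_{i,j}$ does not cross $C_S$ (an \emph{unambiguous} pair, cf.\ Section~\ref{ambigDef}) both $\sigma$ and $\widehat{\sigma}$ order $i,j$ identically. Hence $\sigma$ and $\widehat{\sigma}$ can disagree only on ambiguous pairs, and from $(\ref{kTau})$ we obtain the deterministic bound $d_\tau(\sigma,\widehat{\sigma}) \le \#\{(i,j) : H_{i,j} \text{ crosses } C_S\}$, which holds simultaneously for \emph{every} consistent $\widehat{\sigma}$.

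Next I would take expectations. Conditioned on $S$, Lemma~\ref{equiprobable} shows that $C_S$ is uniform over the $Q(m,d)$ cells of the $S$-arrangement, so
\begin{align*}
\E\big[\#\text{ambiguous pairs} \mid S\big] = \frac{1}{Q(m,d)} \sum_{(i,j)} \#\{\text{cells of the $S$-arrangement crossed by } H_{i,j}\}.
\end{align*}
Pairs with both endpoints in $S$ contribute $0$, since their bisectors are themselves arrangement hyperplanes, and fewer than $\binom{n}{2}$ pairs remain. The crux is to bound, for a \emph{single} bisector $H_{i,j}$, the number of cells it crosses: this equals the number of full-dimensional regions into which the traces of the $\binom{m}{2}$ bisectors of $S$ cut the $(d-1)$-dimensional hyperplane $H_{i,j}$. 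An arrangement of $\binom{m}{2}$ hyperplanes in $\R^{d-1}$ has at most a constant times $m^{2(d-1)}/(2^{d-1}(d-1)!)$ regions (the general-position bound is an upper bound regardless of degeneracy), which is exactly the estimate used for $P(k,d)$ in Lemma~\ref{uniformLemma} (Appendix~\ref{uniformLemmaProof}).

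Combining this single-bisector bound with $Q(m,d) = \Theta\!\big(m^{2d}/(2^d d!)\big)$ from Corollary~\ref{countCor} gives, for $m > d+1$,
\begin{align*}
\E\big[\#\text{ambiguous pairs} \mid S\big] \le \binom{n}{2}\, \frac{c\, m^{2(d-1)}/(2^{d-1}(d-1)!)}{Q(m,d)} = O\!\left(\frac{d}{m^2}\right)\binom{n}{2},
\end{align*}
where the cancellation of the $2^{d-1}(d-1)!$ against $2^d d!$ is precisely what produces the factor $2d/m^2$. Because this bound is uniform in $S$, averaging over the random selection of the $m$ objects leaves it unchanged, giving the claim; the few small cases $m \le d+1$ are trivial since $d_\tau \le \binom{n}{2}$ while $O(d/m^2)\binom{n}{2}$ is already at least a constant multiple of $\binom{n}{2}$. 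The main obstacle is obtaining the single-bisector cell-crossing estimate with the correct $d$-dependence, which is why I reuse the region-counting argument of Lemma~\ref{uniformLemma}; everything else is a routine combination of the two earlier results.
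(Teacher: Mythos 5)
Your proposal is correct and follows essentially the same route as the paper's own proof: both bound $d_\tau(\sigma,\widehat{\sigma})$ by the number of pairs whose bisecting hyperplane crosses the cell determined by the $m$ correctly ranked objects (i.e., the ambiguous pairs), and then bound the per-pair ambiguity probability by the cell-crossing count over $Q(m,d)$, which is $O(d/m^2)$ by Lemma~\ref{equiprobable}, the $P(k,d)$ estimate of Lemma~\ref{uniformLemma}, and Corollary~\ref{countCor}, before summing over fewer than $\binom{n}{2}$ pairs. The only cosmetic difference is that you unpack the $P(m,d)/Q(m,d)$ computation explicitly (and justify applying the crossing bound to bisectors of pairs with both endpoints outside the ranked set), whereas the paper invokes Lemma~\ref{uniformLemma} directly as a per-query probability bound of $2ad/m^2$.
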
 

Combining Lemmas \ref{seqBound} and \ref{goodNews} in a straightforward way, we have the following theorem.
\begin{theorem} \label{approxRanking}
Assume \textbf{A1-2}, $\sigma \sim \U$, and $\textstyle{P(Y_{i,j}\neq y_{i,j}) = p}$. If $R = \Theta((1-2p)^{-2} \log n)$ and $\widehat{\sigma}$ is any ranking in $\Sigma_{n,d}$ that is consistent with all  known pairwise comparisons between the subset of objects ranked in the output of the algorithm of Figure~\ref{fig:alg2}, then with constant probability ${\E[ d_\tau(\sigma,\widehat{\sigma}) ] = {O}(d  (1-2p)^{-2} \log(n ) /n)  \binom{n}{2}}$ and no more than ${O}(d  (1-2p)^{-2} \log^2 (  n ) )$ pairwise comparisons are requested, on average.
\end{theorem}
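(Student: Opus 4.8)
The plan is to combine Lemma~\ref{seqBound}, which lower bounds the number $M$ of objects the algorithm ranks before skipping its first object, with Lemma~\ref{goodNews}, which controls the Kendall-Tau distance of any consistent completion once a uniformly random subset has been correctly ranked. Fix $R = \Theta((1-2p)^{-2}\log n)$. First I would introduce the \emph{good event} $\mathcal{E}$ under which both (i) $M \geq m_0 := \sqrt{(n/R)/(6\log 2)}$ and (ii) every ambiguous query encountered among the first $M$ objects is decided correctly by the vote of $(\ref{voteRule})$. By Lemma~\ref{seqBound}, event (i) holds with probability tending to the constant $\frac{1}{6\sqrt{e}\log 2}$ as $n/R \to \infty$, which is the relevant regime since $R = \Theta((1-2p)^{-2}\log n)$ forces $n/R \to \infty$. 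Moreover, as noted after Theorem~\ref{robustThm}, these first $M$ objects form a uniformly random subset of $\Theta$, and since no two of them lie within $R$ positions of each other in $\sigma$, every voting set among them satisfies $|T_{i,j}| \geq R$, so none is passed over.

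Next I would establish event (ii). Because an object drawn uniformly from $T_{i,j}$ is ranked between $\theta_i$ and $\theta_j$ with probability at least $1/3$ (Lemma~\ref{equiprobable}), each summand of the vote $(\ref{voteRule})$ has the correct sign in expectation with a margin proportional to $(1-2p)$; a Hoeffding bound then shows that a vote over $R$ such objects errs with probability at most $\exp(-\tfrac{2}{9}(1-2p)^2 R)$, exactly as in the proof of Theorem~\ref{robustThm}. A union bound over the at most $2 n \log_2 n$ queries requested bounds the conditional failure probability $\P(\mathcal{E}^c \mid \text{(i)})$ by $2 n \log_2(n)\exp(-\tfrac{2}{9}(1-2p)^2 R)$, which is $o(1)$ once the constant in $R = \Theta((1-2p)^{-2}\log n)$ is taken large enough. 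Hence $\mathcal{E}$ occurs with constant probability, and on $\mathcal{E}$ the first $m_0$ objects are a correctly ranked uniformly random subset.

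The Kendall-Tau bound then follows by applying Lemma~\ref{goodNews} with $m = m_0 = \Omega(\sqrt{n/R})$: since $1/m_0^2 = O(R/n)$, this gives $\E[d_\tau(\sigma,\widehat{\sigma})] = O(dR/n)\binom{n}{2} = O\big(d (1-2p)^{-2}\log(n)/n\big)\binom{n}{2}$ for any $\widehat{\sigma} \in \Sigma_{n,d}$ consistent with the known comparisons among the ranked prefix. For the query count, each ambiguous query triggers $2R$ comparisons, and Theorem~\ref{simpleTheorem} bounds the expected number of ambiguous queries by $O(d\log n)$, so the algorithm requests $O(Rd\log n) = O(d(1-2p)^{-2}\log^2 n)$ comparisons on average, matching Theorem~\ref{robustThm}.

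The step I expect to be the main obstacle is reconciling the conditioning with the unconditional expectation in Lemma~\ref{goodNews}. That lemma averages $d_\tau$ over a uniformly random choice of the $m$ ranked objects, whereas here the ranked prefix and its random size $M$ are coupled to $\mathcal{E}$, so conditioning on $\mathcal{E}$ could in principle bias which size-$m_0$ subset is ranked. The cleanest fix is to phrase the conclusion as holding conditionally on the Lemma~\ref{seqBound} event and to argue that this event, being determined only by the relative positions of the drawn objects, leaves the uniform law over which subset is selected essentially undisturbed (or to absorb any distortion into the $O(\cdot)$ constant). Some care is also needed to confirm that $\widehat{\sigma}$ being consistent with the full output prefix of size $M \geq m_0$ is at least as constraining as consistency with the correctly ranked size-$m_0$ subset, so that Lemma~\ref{goodNews} applies verbatim even though the output may contain additional, unguaranteed objects.
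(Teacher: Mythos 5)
Your proposal is correct and takes essentially the same route as the paper, whose entire proof is the remark that the theorem follows ``by combining Lemmas~\ref{seqBound} and \ref{goodNews} in a straightforward way'': your instantiation (constant probability of $M \geq \sqrt{(n/R)/(6\log 2)}$ from Lemma~\ref{seqBound}, vote correctness via the Hoeffding--union-bound argument of Theorem~\ref{robustThm} with $R = \Theta((1-2p)^{-2}\log n)$, the Kendall--Tau bound from Lemma~\ref{goodNews} with $1/m_0^2 = O(R/n)$, and the $O(Rd\log n)$ query count) is exactly the intended combination. The conditioning subtlety you flag in the last paragraph is genuine, but the paper glosses over it entirely, so your treatment is if anything more careful than the original.
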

If we repeat the algorithm with different initializations of the objects until a sufficient number of objects are ranked before an object is passed over, we can boost this constant probability to an arbitrarily high probability. However, in practice, we recommend running the algorithm just once to completion since we do not believe passing over an object early on greatly affects performance. 

\section{Empirical results} \label{empResults}
In this section we present empirical results for both the error-free algorithm of Figure~\ref{fig:alg} and the robust algorithm of Figure~\ref{fig:alg2}. For the error-free algorithm, $n=100$ points, representing the objects to be ranked, were uniformly at random simulated from the unit hypercube $[0,1]^d$ for $d=1,10,20,\dots,100$. The reference was simulated from the same distribution. For each value of $d$ the experiment was repeated $25$ times using a new simulation of points and the reference. Because responses are error-free, exact identification of the ranking is guaranteed. The number of requested queries is plotted in Figure~\ref{simResults} with the lower bound of Theorem~\ref{LB} for reference. The number of requested queries never exceeds twice the lower bound which agrees with the result of Theorem~\ref{simpleTheorem}.

\begin{table}%
\vspace{-.15in}
\centering
\parbox{2.7in}{
\begin{center}
\includegraphics[scale=.44]{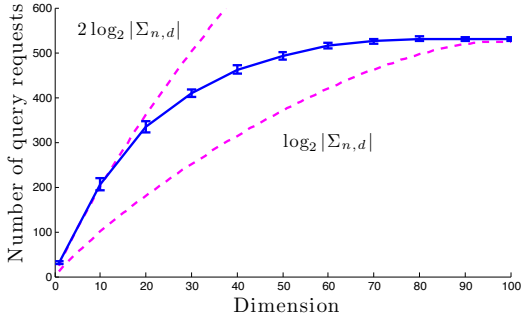} 
\captionof{figure}{Mean and standard deviation of requested queries (solid) in the error-free case for $n=100$; $\log_2 | \Sigma_{n,d}|$ is a lower bound (dashed).}
\label{simResults}
\end{center}
}%
\qquad
\begin{minipage}{2.5in}%
\vspace{-0.1in}
\caption{Statistics for the algorithm robust to persistent errors of Section~\ref{robust} with respect to all $\binom{n}{2}$ pairwise comparisons. Recall $y$ is the noisy response vector, $\tilde{y}$ is the embedding's solution, and $\hat{y}$ is the output of the robust algorithm.}
\label{robustResults}
\begin{center}

  \begin{tabular}{| b{2cm} @{} | c | c | c | }
\hline
\multicolumn{2}{|l|}{Dimension} & 2 & 3 \\
 \hline
\multirow{2}{2cm}{\% of queries requested} & mean & 14.5 &  18.5 \\ \cline{2-4}
  & std & 5.3 & 6 \\ \hline
\multirow{2}{2cm}{Average error} & ${d(y,\tilde{y})}$ & 0.23 & 0.21 \\ \cline{2-4}
 & ${d(y,\hat{y})}$ & 0.31 & 0.29 \\ \hline
  \end{tabular}
  
 \end{center}
\end{minipage}%
%\vspace{-.15in}
\end{table}%

{The robust algorithm of Figure~\ref{fig:alg2} was evaluated using a symmetric similarity matrix dataset available at \cite{simRepository} whose $(i,j)$th entry, denoted $s_{i,j}$, represents the human-judged similarity between audio signals $i$ and $j$ for all $i\neq j \in \{1,\dots,100\}$. If we consider the $k$th row of this matrix, we can rank the other signals with respect to their similarity to the $k$th signal; we define ${q_{i,j}^{(k)} :=  \{s_{k,i} > s_{k,j}\}}$ and $y_{i,j}^{(k)} := \1\{q_{i,j}^{(k)}\}$. Since the similarities were derived from human subjects, the derived labels may be erroneous.  Moreover, there is no possibility of repeating queries here and so the errors are persistent.} The analysis of this dataset in \cite{philips} suggests that the relationship between signals can be well approximated by an embedding in 2 or 3 dimensions. We used {\em non-metric multidimensional scaling} \cite{cox} to find an embedding of the signals: $\theta_1,\dots,\theta_{100}\in\R^d$ for $d=2$ and $3$.  {For each object $\theta_k$, we use the embedding to derive pairwise comparison labels between all other objects as follows: ${\tilde{y}_{i,j}^{(k)} :=  \1\{||\theta_k-\theta_i|| < ||\theta_k-\theta_j||\}}$, which can be considered as the best approximation to the labels $y^{(k)}_{i,j}$ (defined above) in this embedding.} The output of the robust sequential algorithm, which uses only a small fraction of the similarities, is denoted by $\hat{y}_{i,j}^{(k)}$. We set $R=15$ using Theorem~\ref{approxRanking} as a rough guide. Using the popular Kendell-Tau distance $d(y^{(k)},\hat{y}^{(k)})={\binom{n}{2}}^{-1} \sum_{i < j} \1\{y_{i,j}^{(k)} \neq \hat{y}_{i,j}^{(k)}\}$ \cite{marden} for each object $k$, we denote the average of this metric over all objects by ${d(y,\hat{y})}$ and report this statistic and the number of queries requested in Table~\ref{robustResults}. Because the average error of $\hat{y}$ is only $0.07$ higher than that of $\tilde{y}$, this suggests that the algorithm is doing almost as well as we could hope. Also, note that $2R \, 2d \log n / \binom{n}{2}$ is equal to $11.4\%$ and $17.1\%$ for $d=2$ and $3$, respectively, which agrees well with the {experimental} values.

\bibliography{activeRanking_extended}

%\newpage
\appendix
\section{Appendix}

\subsection{Computational complexity and implementation} \label{complexity}
The computational complexity of the algorithm in Figure~\ref{fig:alg} is determined by the complexity of testing whether a query is ambiguous or not and how many times we make this test. As written in Figure~\ref{fig:alg}, the test would be performed $O(n^2)$ times. But if binary sort is used instead of the brute-force linear search this can be reduced to $n \log_2 n$ and, in fact, this is implemented in our simulations and the proofs of the main results. The complexity of each test is polynomial in the number of queries requested because each one is a linear constraint. Because our results show that no more than $O(d \log n)$ queries are requested, the overall complexity is no greater than $O( n \ \text{poly}(d) \ \text{poly}(\log n) )$. 

\subsection{Proof of Corollary \ref{countCor}} \label{countCorProof}
\begin{proof}
For initial conditions given in Lemma \ref{counting}, if $d\ll n-1$ a simple manipulation of (\ref{counteq}) shows
\begin{eqnarray*}
Q(n,d) &=& 1 + \sum_{i=1}^{n-1} (n-i) Q(n-i,d-1)\\
&=& 1 + \sum_{i=1}^{n-1} i \, Q(i,d-1) \\
&=& 1 + \sum_{i=1}^{n-1} i  \bigg[ 1 + \sum_{j=1}^{i-1} j\, Q(j,d-2) \bigg] \\
%&=& 1 +\sum_{i=1}^{n-1} i+ \sum_{i=1}^{n-1} \sum_{j=1}^{i-1} i \, j\, C(j,d-2) \\
&=& 1 + \Theta({n^2}/2) +  \sum_{i=1}^{n-1} \sum_{j=1}^{i-1} i \, j\,  \bigg[ 1 + \sum_{k=1}^{j-1} k\, Q(k,d-3) \bigg]  \\
%&=& 1 + \Theta(n^2/2) + \sum_{i=1}^{n-1} \sum_{j=1}^{i-1} i \, j + \sum_{i=1}^{n-1} \sum_{j=1}^{i-1}  \sum_{k=1}^{j-1} i \, j\, k\, C(k,d-3)   \\
%&=& 1 + \Theta(n^2/2) + \sum_{i=1}^{n-1}  \frac{i^2(i-1)}{2} + \sum_{i=1}^{n-1} \sum_{j=1}^{i-1}  \sum_{k=1}^{j-1} i \, j\, k\, C(k,d-3)   \\
&=& 1 + \Theta(n^2/2) + \Theta(n^4/2/4) + \sum_{i=1}^{n-1} \sum_{j=1}^{i-1} \sum_{k=1}^{j-1} i\, j\, k\,  \bigg[ 1 + \sum_{l=1}^{k-1} l\, Q(l,d-4) \bigg] \\
&=& 1 + \Theta(n^2/2) + \dots + \Theta\bigg(\frac{n^{2d}}{2^{d}d!}\bigg).\\
%&=& \Theta\bigg(\frac{n^{2d}}{2^{d}d!}\bigg).
\end{eqnarray*}
From simulations, this is very tight for large values of $n$. If $d \geq n-1$ then $Q(n,d)=n!$ because any permutation of $n$ objects can be embedded in $n-1$ dimensional space \cite{coombs}.
\end{proof}

\subsection{Construction of a $d$-cell with $n-1$ sides} \label{nSides}
Situations may arise in which $\Omega(n)$ queries must be requested to identify a ranking because the $d$-cell representing the ranking is bounded by $n-1$ hyperplanes (queries) and if they are not all requested, the ranking is ambiguous. We now show how to construct this pathological situation in $\R^2$. Let $\Theta$ be a collection of $n$ points in $\R^2$ where each $\theta \in \Theta$ satisfies $\theta_1^2 = \theta_2$ and $\theta_1 \in [0,1]$ where $\theta_i$ denotes the $i$th dimension of $\theta$ ($i \in \{1,2\}$). Then there exists a $2$-cell in the hyperplane arrangement induced by the queries that has $n-1$ sides. This follows because the slope of the parabola keeps increasing with $\theta_1$ making at least one query associated with $(n-1)$ $\theta$'s bisect the lower-left, unbounded $2$-cell. This can be observed in Figure~\ref{pathological}. Obviously, a similar arrangement could be constructed for all $d\geq 2$. 

\begin{figure}[htbp]
\begin{center}
\includegraphics[scale=.54]{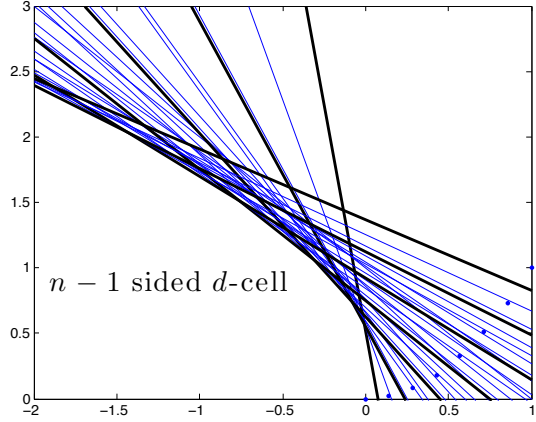} 
\caption{The points $\Theta$ representing the objects are dots on the right, the lines are the queries, and the black, bold lines are the queries bounding the $n-1$ sided $2$-cell.}
\label{pathological}
\end{center}
\end{figure}

\subsection{Proof of Lemma \ref{uniformLemma}} \label{uniformLemmaProof}
\begin{proof}
Here we prove an upper bound on $P(k,d)$. $P(k,d)$ is equal to the number of $d$-cells in the partition induced by objects $1,\dots,k$  that are intersected by a hyperplane corresponding to a pairwise comparison query between object $k+1$ and object $i$, $i\in\{1,\dots,k\}$. This new hyperplane is intersected by all the $\binom{k}{2}$  hyperplanes in the partition.  These intersections partition the new hyperplane into a number of $(d-1)$-cells. Because the $(k+1)$st object is in general position with respect to objects $1,\dots,k$, the intersecting hyperplanes will not intersect the hyperplane in any special or non-general way. That is to say, the number of $(d-1)$-cells this hyperplane is partitioned into is the same number that would occur if the hyperplane were intersected by $\binom{k}{2}$ hyperplanes in general position. Let $K= \binom{k}{2}$ for ease of notation. It follows then from \cite[Theorem 3]{cover} that \begin{eqnarray*}
P(k,d) &=&  \sum_{i=0}^{d-1} \binom{K}{i} \ 
= \  \sum_{i=0}^{d-1}  O \bigg( \frac{K^i}{i!} \bigg) \ = \  \sum_{k=0}^{d-1}  O \bigg( \frac{k^{2i}}{2^i i!} \bigg) 
\ = \   O \bigg( \frac{k^{2(d-1)}}{2^{d-1} (d-1)!}  \bigg).
\end{eqnarray*}
\end{proof}

\subsection{Proof of Theorem \ref{simpleTheorem}} \label{simpleTheoremProof}
\begin{proof}
Let $B_{k+1}$ denote the total number of pairwise comparisons requested of the $(k+1)$st object; i.e., number of ambiguous queries in the set $q_{i,k+1}$, $i=1,\dots,k$. Because the individual events of requesting these are conditionally independent (see Section~\ref{probAmbig}), it follows that each $B_{k+1}$ is an independent binomial random variable with parameters $A(k,d,\U)$ and $k$. The total number of queries requested by the algorithm is
\begin{align} \label{binBern}
M_n = \sum_{k=1}^{n-1} \sum_{i=1}^{k}  \mathbf{1} \{\mbox{Request } q_{i,k+1} \} =\sum_{k=1}^{n-1} B_{k+1} \ .
 \end{align}
Because Lemma~\ref{uniformLemma} is only relevant for sufficiently large $k$, we assume that none of the pairwise comparisons are ambiguous when $k\leq 2d a$. Recall from Section~\ref{complexity} that binary sort is implemented so for these first $\lceil 2d a \rceil$ objects, at most $\lceil 2d a \rceil \log_2(\lceil 2d a \rceil)$ queries are requested. For $k > 2d a$ the number of requested queries to the $k$th object is upper bounded by the number of ambiguous queries of the $k$th object. Then using the known mean and variance formulas for the binomial distribution
\begin{align*}
\E_\U \big[ M_n \big] &=  \sum_{k=1}^{n-1} \E_\U\big[ B_{k+1} \big] \\
&\leq \sum_{k=2}^{\lceil 2d a \rceil} B_{k+1} + \sum_{k={\lceil 2d a \rceil}+1}^{n-1}   \frac{2d a}{k}\\
&\leq \lceil 2d a \rceil \log_2\lceil 2d a \rceil + 2 d a \log \big( n / \lceil 2d a \rceil \big)\\
& \leq \lceil 2d a \rceil \log_2 n
\end{align*}

We now consider the case for a general distribution $\pi$. Enumerate the rankings of $\Sigma_{n,d}$. Let $N_i$ denote the (random) number of requested queries needed by the algorithm to reconstruct the $i$th ranking. Note that the randomness of $N_i$ is only due to the randomization of the algorithm. Let $\pi_i$ denote the probability it assigns to the $i$th ranking as in Definition~\ref{distributionDef}.  Then
\begin{eqnarray} 
\E_\pi [M_n] & = &\sum_{i=1}^{Q(n,d)} \pi_i \, \E[N_i] \label{weightedE}.
\end{eqnarray}
Assume that the distribution over rankings is bounded above such that no ranking is overwhelmingly probable. Specifically, assume that the probability of any one ranking is upper bounded by $c / Q(n,d)$ for some constant $c>1$ that is independent of $n$. Under this bounded distribution assumption, $\E_\pi[M_n]$ is maximized by placing probability $c / Q(n,d)$ on the $k:=Q(n,d)/c$ cells for which $\E[N_i]$ is largest (we will assume $k$ is an integer, but it is straightforward to extend the following argument to the general case).  Since the mass on these cells is equal, without loss of generality we may assume that $\E[N_i] = \mu$, a common value on each, and we have $\E_\pi[M_n]=\mu$.  For the remaining $Q(n,d)-k$ cells we know that $\E[N_i]\geq d$, since each cell is bounded by at least $d$ hyperplanes/queries.  Under these conditions, we can relate $\E_\pi[M_n]$ to $\E_\U[M_n]$ as follows.  First observe that
\begin{eqnarray*}
\E_\U[M_n] & = & \frac{1}{Q(n,d)}\sum_{i=1}^{Q(n,d)} \E[N_i] \
\geq \ \frac{k}{Q(n,d)} \mu + d\, \frac{Q(n,d)-k}{Q(n,d)} \ ,
\end{eqnarray*}
which implies
$$\textstyle{\E_\pi[M_n] \ = \ \mu \ \leq \ \frac{Q(n,d)}{k}\left(\E_\U[M_n]-d\frac{Q(n,d)-k}{Q(n,d)}\right) \ = \ c \left(\E_\U[M_n]-d\frac{Q(n,d)-k}{Q(n,d)}\right) \ \leq \ c \, \E_\U[M_n] \  .} $$
In words, the non-uniformity constant $c>1$ scales the expected number of queries. Under $\textbf{A1-2}$, for large $n$ we have $\E_\pi[M_n] = O(c \, d\log n)$.
\end{proof}

\subsection{Proof of Theorem \ref{iidNoise}} \label{iidNoiseProof}
\begin{proof}
Suppose $q_{i,j}$ is ambiguous. Let $\hat{\alpha}$ be the frequency of $Y_{i,j} = 1$ after $R$ trials. Let $\E[\hat{\alpha}]=\alpha$. The majority vote decision is correct if $|\alpha - \hat{\alpha}| \leq 1/2 - p$. By Chernoff's bound, $\P( |  \alpha - \hat{\alpha} | \geq 1/2-p) \leq 2 \exp({-2(1/2-p)^2 R})$. The result follows from the union bound over the total number of queries considered: $n \log_2 n$ (See Appendix~\ref{complexity}).

\subsection{Proof of Theorem \ref{robustThm}} \label{robustThmProof}
Suppose $q_{i,j}$ is ambiguous. Let $S_{i,j}$ denote the subset of $\Theta$ such that $\theta_k \in S_{i,j}$ if it is ranked between objects $\theta_i$ and $\theta_j$ (i.e. $S_{i,j} = \{ \theta_k \in \Theta:  \theta_i \prec \theta_k \prec \theta_j$ or  $\theta_j \prec \theta_k \prec \theta_i \}$). Note that  $y_{i,j}=y_{i,k}=y_{k,j}$ if and only if $\theta_k \in S_{i,j}$. If we define $E_{i,j}^k = \1 \{ Y_{i,k}=1 \wedge Y_{k,j} =1 \} - \1 \{ Y_{i,k}=0 \wedge Y_{k,j}=0\}$, where $\1$ is the indicator function, then for any subset $T \subset \Theta$ such that $S_{i,j} \subset T$, the sign of the sum  $\textstyle\sum_{\theta_k \in T} E_{i,j}^k$ is a predictor of $y_{i,j}$. In fact, with respect to just the random errors, $\E\big[ \big| \textstyle\sum_{\theta_k \in T} E_{i,j}^k \big| \big] = |S_{i,j}| (1-2p)$. To see this, without loss of generality let $y_{i,j}=1$, then for  $\theta_k \in S_{i,j}$
\begin{align*}
\E[ E_{i,j}^k ] &= \E \big[ \1 \{ Y_{i,k}=1 \wedge Y_{k,j} =1 \} - \1 \{ Y_{i,k}=0 \wedge Y_{k,j}=0\} \big]\\
&= \P( Y_{i,k}=1 \wedge Y_{k,j} =1 ) - \P( Y_{i,k}=0 \wedge Y_{k,j}=0 )\\
&= (1-p)^2 - p^2\\
&= 1-2p.
\end{align*}
If $\theta_k \notin S_{i,j}$ then it can be shown by a similar calculation that $\E[ E_{i,j}^k ] = 0$.

To identify $S_{i,j}$ we use the fact that if $\theta_k \in S_{i,j}$ then $q_{i,k}$, $q_{j,k}$, or both are also ambiguous simply because otherwise $q_{i,j}$ would not have been ambiguous in the first place (Figure~\ref{threeObjects} may be a useful aid to see this). While the converse is false, Lemma~\ref{equiprobable} says that each of the six possible rankings of $\{ \theta_i,\theta_j,\theta_k \}$ are equally probable if they were uniformly at random chosen (thus partly justifying this explicit assumption in the theorem statement). It follows that if we define the subset $T_{i,j} \in \Theta$ to be those objects $\theta_k$ with the property that $q_{i,k}$, $q_{k,j}$, or both are ambiguous then the probability that $\theta_k \in S_{i,j}$ is at least $1/3$ if $\theta_k \subset T_{i,j}$. You can convince yourself of this using Figure~\ref{threeObjects}. Moreover, $\E\big[ \big| \textstyle\sum_{k \in T_{i,j}} E_{i,j}^k \big| \big] \geq |T_{i,j}| (1-2p) /3$ which implies the sign of the sum  $\textstyle\sum_{\theta_k \in T_{i,j}} E_{i,j}^k$ is a reliable predictor of $q_{i,j}$; just how reliable depends only on the size of $T_{i,j}$.

\begin{figure}[htbp]
\begin{center}
\includegraphics[scale=.44]{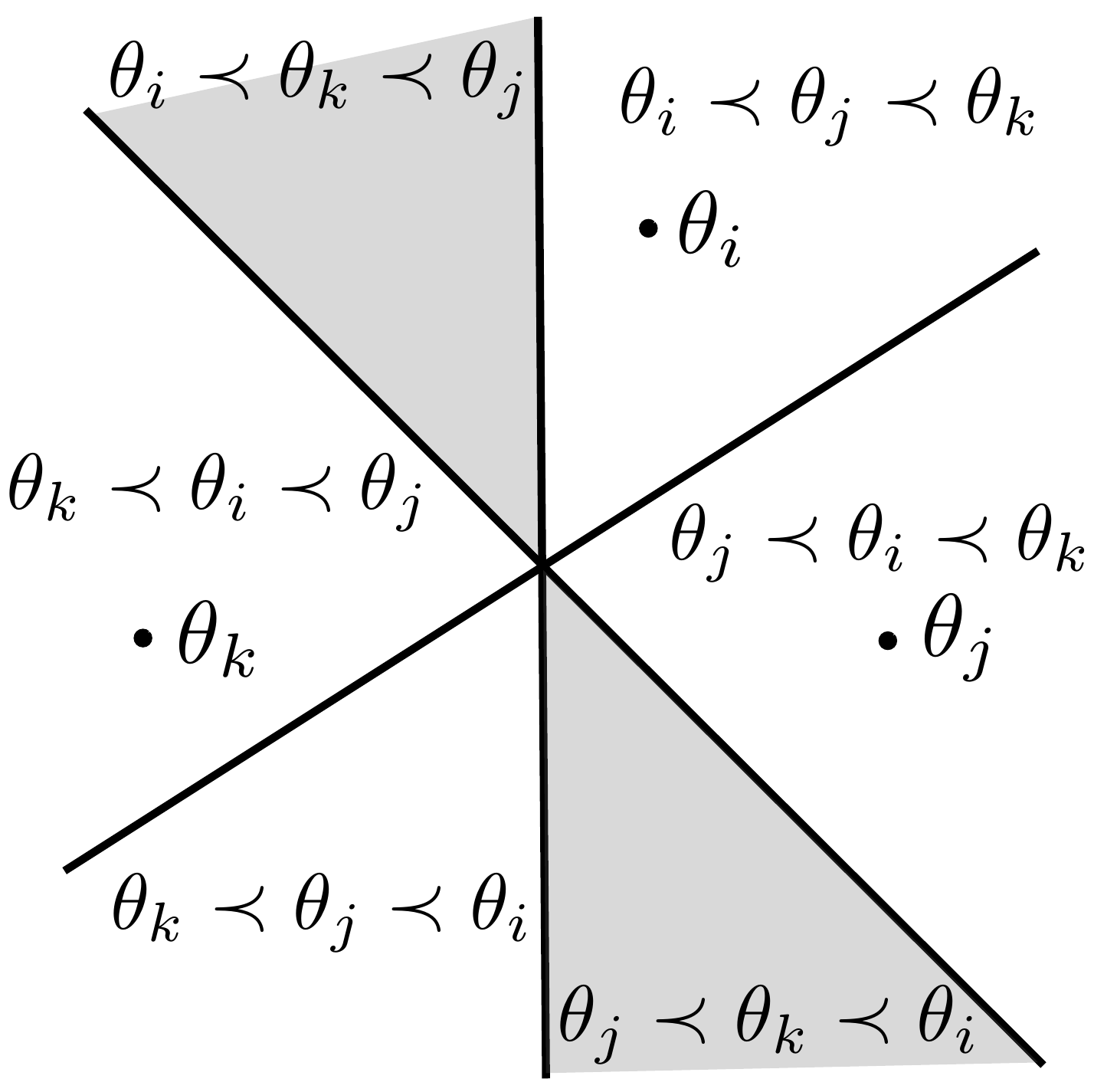} 
\caption{Let $q_{i,j}$ be ambiguous. Object $k$ will be informative to the majority vote of $y_{i,j}$ if the reference lies in the shaded region. There are six possible rankings and if $q_{i,k}$, $q_{k,j}$, or both are ambiguous then the probability that the reference is in the shaded region is at least $1/3$}
\label{threeObjects}
\end{center}
\end{figure}

Fix $R >0$. Suppose $q_{i,j}$ is ambiguous and assume without loss of generality that $y_{i,j}=1$. Given that $\E\big[  \textstyle\sum_{k \in T_{i,j}} E_{i,j}^k  \big] \geq |T_{i,j}| (1-2p) /3$ from above, it follows from Hoeffding's inequality that the probability that $ \textstyle\sum_{k \in T_{i,j}} E_{i,j}^k \leq 0$ is less than $\exp \big( -\frac{2}{9} (1-2p)^2 |T_{i,j}| \big)$. If only a subset of $T_{i,j}$ of size $R$ is used in the sum then $|T_{i,j}|$ is replaced by $R$ in the exponent. This test is only performed when $|T_{i,j}|>R$ and clearly no more times than the number of queries considered to rank $n$ objects in the full ranking: $n \log_2 n$. Thus, all decisions using this test are correct with probability at least $1-2 n \log_2 (n) \exp \big( -\frac{2}{9} (1-2p)^2 R \big)$. Only a subset of the $n$ objects will be ranked and of those, $2 R+1$ times more queries will be requested than in the error-free case (two queries per object in $T_{i,j}$). Thus the robust algorithm will request no more than $O(R d  \log n)$ queries on average.

To determine the number of objects that are in the partial ranking, let $\Theta' \subset \Theta$ denote the subset of objects that are ranked in the output partial ranking. Each $\theta_k \in \Theta'$ is associated with an index in the true full ranking and is denoted by $\sigma(\theta_k)$. That is, if $\sigma(\theta_k) = 5$ then it is ranked fifth in the full ranking but in the partial ranking could be ranked first, second, third, fourth, or fifth. Now imagine the real line with tick marks only at the integers $1,\dots,n$. For each $\theta_k \in \Theta'$ place an $R$-ball around each $\theta_k$ on these tick marks such that if $\sigma(\theta_k) = 5$ and $R = 3$ then $2,\dots,8$ are covered by the ball around $\sigma(\theta_k)$ and $1$ and $9,\dots,n$ are not. Then the union of the balls centered at the objects in $\Theta'$ cover $1,\dots,n$. If this were not true then there would be an object $\theta_j \notin \Theta'$ with $|S_{i,j}| > R$ for all $\theta_i \in \Theta'$. But $S_{i,j} \subset T_{i,j}$  implies  $|T_{i,j}|>R$ which implies $j \in \Theta'$, a contradiction. Because at least $n/ (2R+1)$ $R$-balls are required to cover $1,\dots,n$, at least this many objects are contained in $\Theta'$.
\end{proof}

\subsection{Proof of Lemma~\ref{seqBound}} \label{seqBoundProof}
\begin{proof}
Assume $M \leq \frac{n}{3R}$. If $p_m$ denotes the probability that the $(m+1)$st object is within $R$ positions of one of the first $m$ objects, given that none of the first $m$ objects are within $R$ positions of each other, then $\frac{Rm}{n} < p_m \leq \frac{2Rm}{n-m}$ and 
\begin{align*}
P(M = m)  \geq \prod_{l=1}^{m-1} \bigg(1-\frac{2Rl}{n-l}\bigg)\frac{Rm}{n}.
\end{align*}
Taking the $\log$ we find
\begin{align*}
\log P(M = m) &\geq \log \frac{Rm}{n} +  \sum_{l=1}^{m-1} \log  \bigg(1-\frac{2Rl}{n-l}\bigg)\\
 &\geq \log \frac{Rm}{n} + (m-1) \log \bigg( \frac{1}{(m-1)}\sum_{l=1}^{m-1}  \bigg(1-\frac{2Rl}{n-l}\bigg) \bigg)\\ 
 &\geq \log \frac{Rm}{n} + (m-1) \log \bigg( 1-\frac{Rm}{n-m+1} \bigg)\\ 
 &\geq \log \frac{Rm}{n} + (m-1) \log \bigg( 1-\frac{3 Rm}{2n} \bigg) \\
  &\geq \log \frac{Rm}{n} + (m-1)  \bigg( -\frac{3 \log(2) Rm}{n} \bigg) 
\end{align*}
where the second line follows from Jensen's inequality, the fourth line follows from the fact that $m \leq \frac{n}{3R}$, and the last line follows from the fact that $(1-x) \geq \exp(-2 \log (2) x)$ for $x \leq 1/2$. We conclude that $P(M=m) \geq \frac{R}{n}m \exp\{-3 \log(2)   \frac{R}{n} m^2 \}$. Now if  $a =  \sqrt{\frac{n/R}{6 \log(2)}}$ we have 
\begin{align*}
P(M \geq a) \geq &  \sum_{m=\lceil a \rceil}^{n/(3R)-1} \frac{R}{n}m \exp\{-3 \log(2)   \frac{R}{n} m^2 \} \\
\geq & \int_{a+1}^{n/(3R)} \frac{R}{n}x \exp\{-3 \log(2)   \frac{R}{n} x^2 \} dx \\
= & \frac{1}{6 \log(2)}  \bigg( e^{-(\sqrt{6 \log(2)   R/n} +1)^2 /2}  - e^{-\log(2) n / (3R)} \bigg)
\end{align*}
where the second line follows from the fact that  $x  e^{ -\alpha x^2 /2 }$ is monotonically decreasing for $x\geq \sqrt{1/\alpha}$. Note, $P(M \geq  \sqrt{\frac{n/R}{6 \log(2)}}  )$ is greater than $\frac{1}{100}$ for $n/R \geq 7$, and $\frac{1}{10}$ for ${n/R \geq 40}$. Moreover, as $n/R \rightarrow \infty$,  ${P(M \geq \sqrt{\frac{n/R}{6 \log(2)}}) \rightarrow \frac{1}{6 \sqrt{e} \log(2)}}$.
\end{proof}

\subsection{Proof of Lemma~\ref{goodNews}} \label{goodNewsProof}
\begin{proof}
Enumerate the objects such that the first $m$ are the objects ranked amongst themselves. Let $y$ be the pairwise comparison label vector for $\sigma$ and $\hat{y}$ be the corresponding vector for $\widehat{\sigma}$. Then
\begin{align*}
\E[d_\tau(\sigma,\widehat{\sigma})] &= \sum_{k=2}^m \sum_{l=1}^{k-1} \1\{ y_{l,k} \neq \hat{y}_{l,k}\} +  \sum_{k=m+1}^n \sum_{l=1}^{k-1} \1\{ y_{l,k} \neq \hat{y}_{l,k}\}\\
&= \sum_{k=m+1}^n \sum_{l=1}^{k-1} \1\{ y_{l,k} \neq \hat{y}_{l,k}\}\\
&\leq \sum_{k=m+1}^n \sum_{l=1}^{k-1} P\{\mbox{Request } q_{l,k} | \mbox{labels to }q_{s \leq m,t \leq m} \}\\
&\leq \sum_{k=m+1}^n \sum_{l=1}^{k-1} \frac{2 a d}{m^2}\\ 
&\leq  \frac{2 a d}{m^2} \frac{(n-m)(n+m+1)}{2}\\
&\leq  a d \bigg( \frac{(n+1)^2}{m^2} - 1 \bigg).
\end{align*}
where the third line assumes that every pairwise comparison that is ambiguous (that is, cannot be imputed using the knowledge gained from the first $m$ objects) is incorrect. The fourth line follows from the application of Lemma~\ref{equiprobable} and Lemma~\ref{uniformLemma}. 
\end{proof}

\end{document}